\crefname{section}{Sec.}{Secs.}
\Crefname{section}{Section}{Sections}
\Crefname{table}{Table}{Tables}
\crefname{table}{Tab.}{Tabs.}
\begin{document}

\title{Robust Outlier Rejection for 3D Registration with Variational Bayes}
\author[1]{Haobo Jiang}
\author[2]{Zheng Dang}
\author[2]{Zhen Wei}
\author[1]{Jin Xie$^*$}
\author[1]{Jian Yang$^*$}
\author[2]{Mathieu Salzmann$^*$}
\affil[1]{PCA Lab, Nanjing University of Science and Technology, China}
\affil[2]{CVLab, EPFL, Switzerland} 
\affil[ ]{\tt\small {\{jiang.hao.bo, csjxie, csjyang\}@njust.edu.cn}}
\affil[ ]{\tt\small {\{zheng.dang, zhen.wei, mathieu.salzmann\}@epfl.ch}}

\maketitle

\let\thefootnote\relax\footnotetext{$^*$Corresponding authors}
\let\thefootnote\relax\footnotetext{Haobo Jiang, Jin Xie, and Jian Yang are with PCA Lab, Key Lab of Intelligent Perception and Systems for High-Dimensional Information of Ministry of Education, and Jiangsu Key Lab of Image and Video Understanding for Social Security, School of Computer Science and Engineering, Nanjing University of Science and Technology, China.}


\newtheorem{theorem}{Theorem}
\newtheorem{lemma}{Lemma}
\newtheorem{property}{Property}
\newtheorem{definition}{Definition}
\newtheorem{example}{Example}
\begin{abstract} 
Learning-based outlier (mismatched correspondence) rejection for robust 3D registration generally formulates the outlier removal as an inlier/outlier classification problem. The core for this to be successful is to learn the discriminative inlier/outlier feature representations. In this paper, we develop a novel variational non-local network-based outlier rejection framework for robust alignment. By reformulating the non-local feature learning with variational Bayesian inference, the Bayesian-driven long-range dependencies can be modeled to aggregate discriminative geometric context information for inlier/outlier distinction. Specifically, to achieve such Bayesian-driven contextual dependencies, each query/key/value component in our non-local network predicts a prior feature distribution and a posterior one. Embedded with the inlier/outlier label, the posterior feature distribution is label-dependent and discriminative. Thus, pushing the prior to be close to the discriminative posterior in the training step enables the features sampled from this prior at test time to model high-quality long-range dependencies. Notably, to achieve effective posterior feature guidance, a specific probabilistic graphical model is designed over our non-local model, which lets us derive a variational low bound as our optimization objective for model training. Finally, we propose a voting-based inlier searching strategy to cluster the high-quality hypothetical inliers for transformation estimation. Extensive experiments on 3DMatch, 3DLoMatch, and KITTI datasets verify the effectiveness of our method.  
Code is available at \href{https://github.com/Jiang-HB/VBReg}{https://github.com/Jiang-HB/VBReg}.
\end{abstract}

\section{Introduction}
\label{sec:intro}
Point cloud registration is a fundamental but challenging 3D computer vision task, with many potential applications such as 3D scene reconstruction~\cite{agarwal2011building,schonberger2016structure}, object pose estimation~\cite{wong2017segicp,dang2022learning}, and Lidar SLAM ~\cite{deschaud2018imls,zhang2014loam}. It aims to align two partially overlapping point clouds by estimating their relative rigid transformation, i.e., 3D rotation and 3D translation. A popular approach to address the large-scale scene registration problem consists of extracting point descriptors~\cite{zeng20173dmatch,choy2019fully,deng2018ppf,frome2004recognizing,rusu2009fast,salti2014shot} and establishing correspondences between the two point clouds, from which the transformation can be obtained geometrically. In this context, much effort has been dedicated to designing traditional and deep learning-based descriptors~\cite{zeng20173dmatch,choy2019fully,wang2021you,bai2020d3feat,huang2021predator}. 
However, the resulting correspondences inevitably still suffer from outliers (wrong matchings), particularly in challenging cases, such as low-overlap, repetitive structures, or noisy point sets, leading to registration failure.

To address this, many outlier filtering strategies have been developed to robustify the registration process.  
These include traditional rejection methods using random sample consensus~\cite{fischler1981random}, point-wise descriptor similarity~\cite{lowe2004distinctive,bradski2000opencv} or group-wise spatial consistency~\cite{yang2019performance}. 
Deep learning methods have also been proposed, focusing on learning correspondence features used to estimate inlier confidence values~\cite{choy2020deep,pais20203dregnet,bai2021pointdsc}.
In particular, the current state-of-the-art method, PointDSC~\cite{bai2021pointdsc}, relies on a spatial consistency-driven non-local network to capture long-range context in its learned correspondence features. 
While effective, PointDSC still yields limited registration robustness, particularly for scenes with a high outlier ratio, where the spatial consistency constraints may become ambiguous~\cite{quan2020compatibility}, thereby degrading the correspondence features' quality.

In this paper, we propose to explicitly account for the ambiguities arising from high outlier ratios by developing a probabilistic feature learning framework. 
To this end, we introduce a variational non-local network based on an attention mechanism to learn discriminative inlier/outlier feature representations for robust outlier rejection. 
Specifically, to capture the
ambiguous nature of long-range contextual dependencies, we inject a random feature in each query, key, and value component in our non-local network. The prior/posterior distributions of such random features are predicted by prior/posterior encoders. 
To encourage the resulting features to be discriminative, we make the posterior feature distribution label-dependent. During training, we then push the prior distribution close to the label-dependent posterior, thus allowing the prior encoder to also learn discriminative query, key, and value features. This enables the features sampled from this prior at test time to model high-quality long-range dependencies.

To achieve effective variational inference, we customize a probabilistic graphical model over our variational non-local network to characterize the conditional dependencies of the random features. This lets us derive a variational lower bound as the optimization objective for network training. 
Finally, we propose a voting-based deterministic inlier searching mechanism for transformation estimation, where the correspondence features learned from all non-local iterations jointly vote for high-confidence \textit{hypothetical inliers} for SVD-based transformation estimation. 
We theoretically analyze the robustness of our deterministic inlier searching strategy compared to RANSAC, which also motivates us to design a conservative seed selection mechanism to improve robustness in sparse point clouds. 

To summarize, our contributions are as follows:
\begin{itemize}
	\item We propose a novel variational non-local network for outlier rejection, learning discriminative correspondence features with Bayesian-driven long-range contextual dependencies. 
	\item We customize the probabilistic graphical model over our variational non-local network and derive the variational low bound for effective model optimization. 
	\item We introduce a Wilson score-based voting mechanism to search high-quality \textit{hypothetical inliers}, and theoretically demonstrate its superiority over RANSAC. 
	\end{itemize}
Our experimental results on extensive benchmark datasets demonstrate that our framework outperforms the state-of-the-art registration methods. 

\section{Related Work}

\noindent\textbf{End-to-end Registration Methods.}  
With the advances of deep learning in the 3D vision field  \cite{qi2017pointnet}, the learning-based end-to-end registration model has achieved increasing research attention. 
DCP~\cite{wang2019deep} uses the feature similarity to establish pseudo correspondences for SVD-based transformation estimation. 
RPM-Net~\cite{yew2020rpm} exploits the Sinkhorn layer and annealing for discriminative matching map generation.  
\cite{jiang2021sampling,jiang2021planning} integrate the cross-entropy method into the deep model for robust  registration. 
RIENet~\cite{shen2022reliable} uses the structure difference between the source neighborhood and the pseudo-target one for inlier confidence evaluation.
With the powerful feature representation of Transformer, RegTR~\cite{yew2022regtr}  effectively aligns large-scale indoor scenes in an end-to-end manner. 
\cite{dang2022learning} propose a match-normalization layer for robust registration in the real-world 6D object pose estimation task.
More end-to-end models such as \cite{li2020iterative,choy2020deep,pais20203dregnet,li2020unsupervised,li2019pc,zhu2020reference,fu2021robust} also present impressive precisions.

\noindent\textbf{Learning-based Feature Descriptors.} 
To align the complex scenes, a popular pipeline is to exploit feature descriptors for 3D matching. 
Compared to hand-crafted descriptors such as \cite{frome2004recognizing,rusu2009fast,salti2014shot}, the deep feature descriptor presents superior registration precision and has achieved much more attention in recent years. The pioneering 3DMatch \cite{zeng20173dmatch} exploits the Siamese 3D CNN to learn the local geometric feature via contrastive loss.  
FCGF \cite{choy2019fully} exploits a fully convolutional network for dense feature extraction in a one-shot fashion. 
Furthermore, D3feat \cite{bai2020d3feat} jointly learns the dense feature descriptor and the detection score for each point. 
By integrating the overlap-attention module into D3feat, Predator \cite{huang2021predator} largely improves the registration reliability in low-overlapping point clouds. 
YOHO \cite{wang2021you} utilizes the group equivariant feature learning to achieve the rotation invariance and shows great robustness to the point density and the noise interference.
\cite{qin2022geometric} develops a geometric transformer to learn the geometric context for robust super-point matching. Lepard~\cite{li2022lepard} embeds the relative 3D positional encoding into the transformer for discriminative descriptor learning. 

\noindent\textbf{Outlier Rejection Methods.}
Despite significant progress in learning-based feature descriptor, generating mismatched correspondences (outliers) in some challenging scenes remains unavoidable.
Traditional outlier filtering methods, such as RANSAC \cite{fischler1981random} and its variants \cite{le2019sdrsac, li2020gesac, barath2018graph}, use repeated sampling and verification for outlier rejection. However, these methods tend to have a high time cost, particularly in scenes with a high outlier ratio. Instead, FGR \cite{zhou2016fast} and TEASER \cite{yang2020teaser} integrate the robust loss function into the optimization objective to weaken the interference from outliers. Recently, 
Chen et al. \cite{chen2022sc2} developed second-order spatial compatibility for robust consensus sampling. 
With the rise of deep 3D vision, most learnable outlier rejection models \cite{choy2020deep, pais20203dregnet} formulate outlier rejection as a binary classification task and reject correspondences with low confidence. Yi et al. \cite{yi2018learning} proposed a context normalization-embedded deep network for inlier evaluation, while Brachmann et al. \cite{brachmann2019neural} enhanced classical RANSAC with neural-guided prior confidence. As our baseline, PointDSC \cite{bai2021pointdsc} proposes exploiting a spatial consistency-guided non-local inlier classifier for inlier evaluation, followed by neural spectral matching for robust registration. However, under high outlier ratios, spatial consistency can be ambiguous (as shown in Fig.~\ref{ratio}), misleading non-local feature aggregation. Instead, we propose exploiting Bayesian-driven long-range dependencies for discriminative non-local feature learning.

\section{Approach}
\label{sec1}
\subsection{Background}
\noindent\textbf{Problem Setting.}
In the pairwise 3D registration task, given a source point cloud $\mathbf{X}=\{\mathbf{x}_i\in\mathbb{R}^3\mid i=1,...,|\mathbf{X}|\}$ and a target point cloud $\mathbf{Y}=\{\mathbf{y}_j\in\mathbb{R}^3\mid j=1,...,|\mathbf{Y}|\}$, we aim to find their optimal rigid transformation consisting of a rotation matrix ${\mathbf{R}^*} \in SO(3)$ and a translation vector ${\mathbf{t}^*} \in \mathbb{R}^3$ to align their overlapping region precisely. 
In this work, we focus on the descriptor-based pipeline for large-scale scene registration. 
Based on the feature-level nearest neighbor, we construct a set of putative correspondences $\mathcal{C}=\left\{\mathbf{c}_i=\left(\mathbf{x}_i, \mathbf{y}_i\right)\in\mathbb{R}^6\mid i=1,...,|\mathcal{C}|\right\}$. 
The inlier (correctly matched correspondence) is defined as the correspondence satisfying $\left\|\mathbf{R}^*\mathbf{x}_{i} + \mathbf{t}^*-\mathbf{y}_{i}\right\|<\varepsilon$, where $\varepsilon$ indicates the inlier threshold. 


\noindent\textbf{Vanilla Non-local Feature Embedding.} Given the putative correspondence set $\mathcal{C}$, \cite{bai2021pointdsc} leverages the spatial consistency-guided non-local network (\textit{SCNonlocal}) for their feature embedding. 
The injected geometric compatibility matrix can effectively regularize the long-range dependencies for discriminative inlier/outlier feature learning. 
In detail, it contains $L$ iterations and the feature aggregation in $l$-th iteration can be  formulated as: 
\begin{equation}\small \label{nonlocal_op}
	\setlength{\abovedisplayskip}{2pt}
	\setlength{\belowdisplayskip}{2pt}
	\begin{split}
		\mathbf{F}^{l+1}_{i} = \mathbf{F}^{l}_{i} + \operatorname{MLP}\Big(\sum_{j=1}^{|\mathcal{C}|}\operatorname{softmax}_j(\boldsymbol{\alpha}^{l}\boldsymbol{\beta})\mathbf{V}_j^l\Big),
	\end{split}
\end{equation}
where $\mathbf{F}^{l}_{i}\in\mathbb{R}^{d}$ indicates the  feature embedding of correspondence $\mathbf{c}_i$ in $l$-th iteration (the initial feature $\mathbf{F}^{0}_{i}$ is obtained via linear projection on $\mathbf{c}_i$) and $\mathbf{V}^l_i =f^l_v(\mathbf{F}^{l}_{i})\in\mathbb{R}^{d}$ is the projected value feature. $\boldsymbol{\alpha}^{l}\in\mathbb{R}^{|\mathcal{C}|\times|\mathcal{C}|}$ is the non-local attention map whose entry $\boldsymbol{\alpha}_{i,j}^{l}$ reflects the feature similarity between the projected query feature $\mathbf{Q}^l_i = f^l_q(\mathbf{F}^{l}_{i})\in\mathbb{R}^d$ and the key feature $\mathbf{K}^l_i =f^l_k(\mathbf{F}^{l}_{j})\in\mathbb{R}^d$. $\boldsymbol{\beta}\in\mathbb{R}^{|\mathcal{C}|\times|\mathcal{C}|}$ represents the geometric compatibility matrix of correspondences, where the compatibility between $\mathbf{c}_i$ and $\mathbf{c}_j$ is:
\begin{equation}\label{compat1}\small
	\setlength{\abovedisplayskip}{2pt}
	\setlength{\belowdisplayskip}{2pt}
	\begin{split}
		\boldsymbol{\beta}_{i,j} = \max\Big(0, 1 - \frac{d^2_{ij}}{\varepsilon^2}\Big), \ \ d_{ij} = \left| \|\mathbf{x}_i-\mathbf{x}_j\| - \|\mathbf{y}_i-\mathbf{y}_j\|  \right|. 
	\end{split}
\end{equation} 
Based on the fact that the geometric distance $d_{i,j}$  of inliers $\mathbf{c}_i$ and $\mathbf{c}_j$ tend to be minor, Eq.~\ref{compat1} will assign a high compatibility value on the inlier pair, thereby promoting the non-local network to effectively cluster the inlier features for discriminative inlier/outlier feature learning. 

\begin{figure}
	\centering
	\includegraphics[width=0.9\columnwidth]{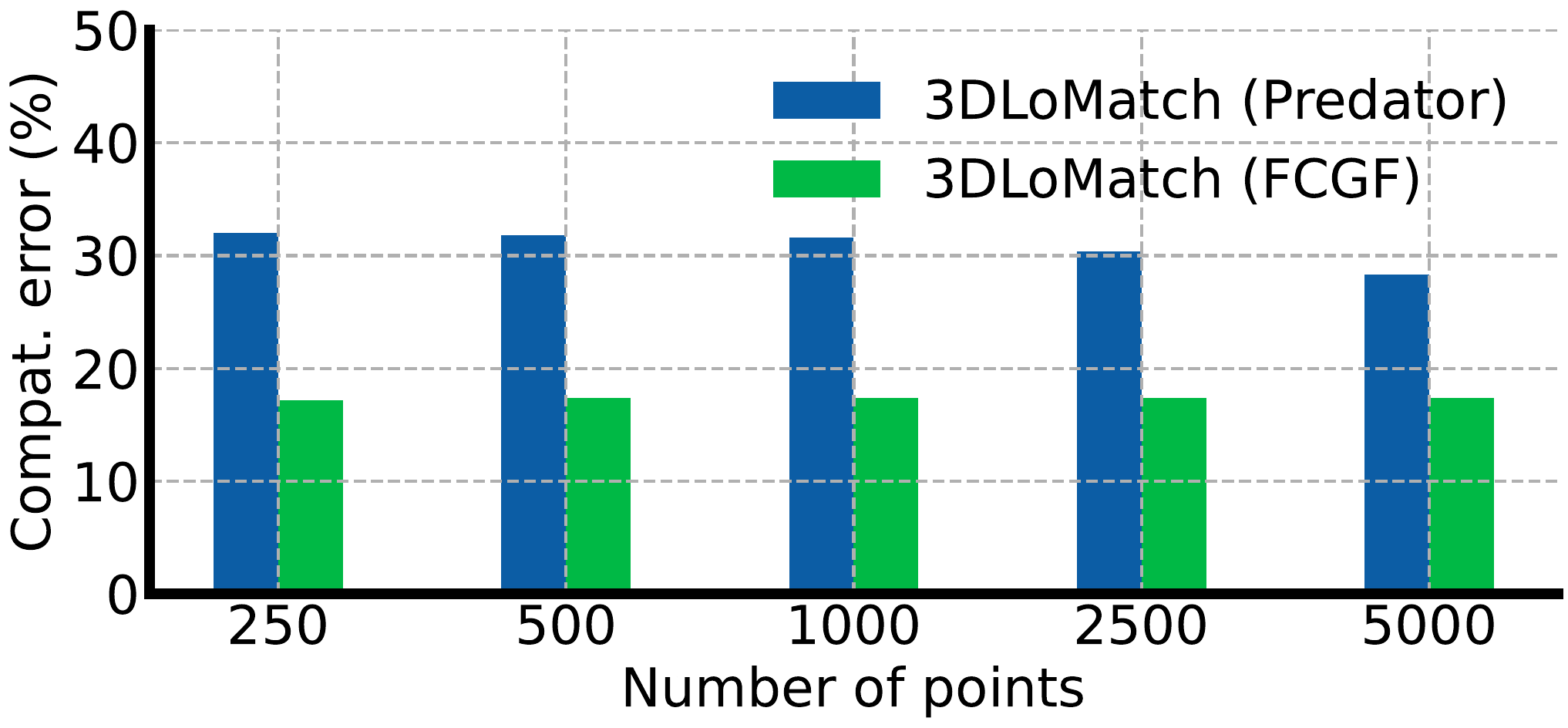}
	\vspace{-4mm}
	\caption{The ratio of inlier-outlier pairs with positive compatibilities in {3DLoMatch}~\cite{huang2021predator} using {FCGF}   and {Predator} descriptors.} \label{fig:model}
	\label{ratio}
	\vspace{-4mm}
\end{figure}

\subsection{Variational Non-local  Feature Embedding}  \label{vbnonlocal}
While effective, \textit{SCNonlocal} still suffers from ambiguous long-range dependencies, especially in some challenging scenes (e.g., the low-overlapping case). 
Two essential reasons are: \textbf{(i)} Wrong geometric compatibility. As shown in Fig.~\ref{ratio}, for {3DLoMatch} dataset with {Predator} and FCGF descriptors, almost 30\% and 17\% of inlier-outlier pairs own the positive compatibility values, respectively, which potentially misleads the attention weight for wrong feature clustering. 
\textbf{(ii)} Lack of uncertainty modeling. In symmetric or repetitive scenes, the inlier/outlier prediction contains significant uncertainty. Therefore, it's necessary to design a robust feature representation to capture such uncertainty. 




To overcome them, we develop a variational non-local network, a probabilistic feature learning framework, for discriminative correspondence embedding. 
Our core idea is to inject random features into our model to capture the ambiguous nature of long-range dependencies, and then leverage the variational Bayesian inference to model the Bayesian-driven long-range dependencies for discriminative feature aggregation. 
Specifically, we first introduce the random feature variables ${z}^l_{k,i}$, ${z}^l_{q,i}$ and ${z}^l_{v,i}$ into the key $\mathbf{K}^l_{i}$, query $\mathbf{Q}^l_{i}$ and value $\mathbf{V}^l_{i}$ components in our non-local module to capture their potential uncertainty in the long-range dependency modeling. 
Then, the prior/posterior encoders are constructed to predict their prior feature distribution and the posterior one, respectively. 
Embedded with the inlier/outlier label, the posterior feature distribution is label-dependent and discriminative. \
Thus, by pushing the prior close to the discriminative posterior in the training phase, this prior at test time also tends to sample discriminative query, key, and value features for high-quality long-range dependency modeling. 

\noindent\textbf{Probabilistic Graphical Model over Variational Non-local Network.} 
To achieve effective variational Bayesian inference, we need to first characterize the conditional dependencies of the injected random features so that the variational lower bound can be derived as the optimization objective for model training. 
As shown in Fig.~\ref{graph}, we customize the probabilistic graphical model over our non-local network to clarify the dependencies of random features (the circles). 
The solid line denotes the label prediction process, while the dashed line represents our label-dependent posterior encoder (\ie, inference model).
Notably, the deterministic hidden query/key/value features $ \mathbf{h}^{l}_{k,i}\in\mathbb{R}^{d'}$, $ \mathbf{h}^{l}_{q,i}\in\mathbb{R}^{d'}$, and $ \mathbf{h}^{l}_{v,i}\in\mathbb{R}^{d'}$ are also introduced to summarize the historical information for better feature updating in each iteration.  

\noindent\textbf{Inlier/outlier Prediction Process.} 
Based on the defined conditional dependencies in Fig.~\ref{graph}, the prediction process of correspondence labels $\mathbf{b}=\left\{b_1, b_2, ..., b_{|\mathcal{C}|} \mid b_i \in \{0,1\}\right\}$ (1 indicates inlier and 0 outlier) is formulated as follows. 
Beginning with the initial linear projection $\tilde{\mathbf{F}}^0\in\mathbb{R}^{|\mathcal{C}|\times d}$ of correspondences $\mathcal{C}=\{\mathbf{c}_i\}$, we iteratively perform the probabilistic non-local aggregation for feature updating. 
In the $l$-th iteration, we first employ a Gated Recurrent Unit (GRU)~\cite{chung2014empirical} to predict the hidden query/key/value features which summarize the historical query/key/value features (sampled from the prior distributions) and the correspondence features in previous iterations: 
\begin{equation}\label{compat}
	\setlength{\abovedisplayskip}{3pt}
	\setlength{\belowdisplayskip}{3pt}
	\begin{split}
		 \mathbf{h}^{l}_{q,i}=\operatorname{GRU}_q(\mathbf{h}^{l-1}_{q,i}, [\mathbf{z}^{l-1}_{q,i}, \tilde{\mathbf{F}}_i^{l-1}]), \\ 
		 \mathbf{h}^{l}_{k,i}=\operatorname{GRU}_k(\mathbf{h}^{l-1}_{k,i}, [\mathbf{z}^{l-1}_{k,i}, \tilde{\mathbf{F}}_i^{l-1}]), \\ 
		 \mathbf{h}^{l}_{v,i}=\operatorname{GRU}_v(\mathbf{h}^{l-1}_{v,i}, [\mathbf{z}^{l-1}_{v,i}, \tilde{\mathbf{F}}_i^{l-1}]), \\ 
	\end{split}
\end{equation} 
where $[\cdot,\cdot]$ denotes the feature concatenation and $\tilde{\mathbf{F}}_i^{l-1}$ is the learned correspondence features of $\mathbf{c}_i$ in iteration $l-1$. 
Then, with as input the predicted hidden features, the prior encoder $p_\theta(\cdot)$ is utilized to predict the prior feature distribution for query/key/value, respectively. 
Furthermore, we sample features $\mathbf{z}^{l}_{q,i}\in\mathbb{R}^{\tilde{d}}$, $\mathbf{z}^{l}_{k,i}\in\mathbb{R}^{\tilde{d}}$ and $\mathbf{z}^{l}_{v,i}\in\mathbb{R}^{\tilde{d}}$ from the predicted prior query/key/value distribution and combine them with the hidden features to predict the corresponding query $\tilde{\mathbf{Q}}_i^l\in\mathbb{R}^d$, key $\tilde{\mathbf{K}}_i^l\in\mathbb{R}^d$ and value $\tilde{\mathbf{V}}_i^l\in\mathbb{R}^d$ through a neural network $f^{q,k,v}_\theta: \mathbb{R}^{d'+\tilde{d}}\rightarrow\mathbb{R}^d$:
\begin{equation}\label{compat}
	\begin{split}
			\mathbf{z}^{l}_{q,i}&\sim p_\theta({z}^{l}_{q,i}\mid \mathbf{h}^l_{q,i}),\ \  \tilde{\mathbf{Q}}_i^l=f^q_\theta(\left[\mathbf{z}^{l}_{q,i},  \mathbf{h}^{l}_{q,i}\right]),\\ 
\mathbf{z}^{l}_{k,i}&\sim p_\theta({z}^{l}_{k,i}\mid \mathbf{h}^l_{k,i}),\ \  \tilde{\mathbf{K}}_i^l=f^k_\theta(\left[\mathbf{z}^{l}_{k,i},  \mathbf{h}^{l}_{k,i}\right]),\\ 
		\mathbf{z}^{l}_{v,i}&\sim p_\theta({z}^{l}_{v,i}\mid \mathbf{h}^l_{v,i}),\ \  \tilde{\mathbf{V}}_i^l=f^v_\theta(\left[\mathbf{z}^{l}_{v,i},  \mathbf{h}^{l}_{v,i}\right]), 
	\end{split}
\end{equation} 
where the prior feature distribution is the Gaussian distribution with the mean and the standard deviation parameterized by a neural network. 
Finally, with the learned $\tilde{\mathbf{Q}}_i^l$, $\tilde{\mathbf{K}}_i^l$ and $\tilde{\mathbf{V}}_i^l$, the correspondence feature $\tilde{\mathbf{F}}_i^l$ in $l$-th iteration can be aggregated with the same non-local operation in Eq.~\ref{nonlocal_op}. After $L$ feature iterations, we feed the correspondence feature $\tilde{\mathbf{F}}_i^L$ in the last iteration into a label prediction model $y_{\theta}$ to predict the inlier/outlier labels ${b}_i\sim y_{\theta}({b}_i\mid \tilde{\mathbf{F}}_i^L)$, where the label prediction model outputs a scalar Gaussian distribution with the mean parameterized by the neural network and the unit variance. 

\begin{figure}
	\centering
	\includegraphics[width=\columnwidth]{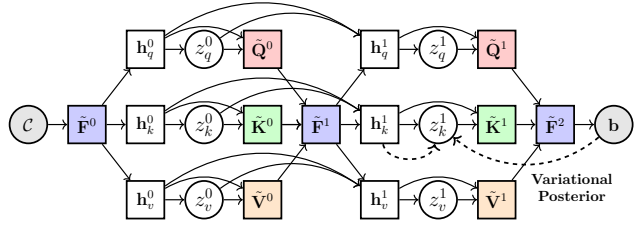}
	\vspace{-4mm}
	\caption{Probabilistic graphical model for our variational non-local network.  For simplicity, we just demonstrate two iterations. The white circles indicate the random features and the white squares denote the deterministic hidden features. 
		The solid line represents the inlier/outlier prediction process and the dashed line denotes the label-dependent variational posterior encoder. 
		We just show the variational posterior for ${z}^1_k$.} \label{fig:model}
	\label{graph}
	\vspace{-4mm}
\end{figure}

\noindent\textbf{Variational Posterior Encoder.}  
Due to the nonlinearity of our variational non-local model, we cannot directly derive the precise posterior distribution for random query/key/value features using the standard Bayes' theorem. Taking inspiration from the Variational Bayesian inference, we construct a label-dependent posterior encoder $q_\phi(\cdot)$ to to approximate the feature posterior:
\begin{equation}\label{compat}
	\begin{split}
		\mathbf{z}^{l}_{q,i}&\sim q_\phi({z}^{l}_{q,i}\mid [\mathbf{h}^l_{q,i}, [{b}_i]_{\times k}])\\ 
		\mathbf{z}^{l}_{k,i}&\sim q_\phi({z}^{l}_{k,i}\mid [\mathbf{h}^l_{k,i}, [{b}_i]_{\times k}])\\ 
		\mathbf{z}^{l}_{v,i}&\sim q_\phi({z}^{l}_{v,i}\mid [\mathbf{h}^l_{v,i}, [{b}_i]_{\times k}]),
	\end{split}
\end{equation} 
where $[{b}_i]_{\times k}$ indicates a label vector generated by tiling the scalar label $k$ times. The output of each posterior encoder is a diagonal Gaussian distribution with parameterized mean and standard deviation.


\noindent\textbf{Variational Lower Bound.} Finally, we derive the optimization objective $\operatorname{ELBO}(\theta, \phi)$, the variational (evidence) lower bound of log-likelihood correspondence labels $\ln y_\theta(\mathbf{b} \mid \mathcal{C})$,  to train our variational non-local network (Please refer to Appendix~\ref{proof} for the detailed derivation):
\begin{equation}\label{lb}\small
	\begin{split}
		&\operatorname{ELBO}(\theta, \phi) =  \mathbb{E}_{\prod^{\textcolor{black}{L-1}}_{l=\textcolor{black}{0}}{q_\phi(z^{l}_{q,k,v}\mid {\mathbf{h}^l_{q,k,v}}}, \mathbf{b})}\left[\ln y_\theta(\mathbf{b}\mid {\textcolor{black}{\tilde{\textcolor{black}{\mathbf{F}}}}^L})\right] - 
		\\ &\sum^{\textcolor{black}{L-1}}_{l=\textcolor{black}{0}}\textcolor{black}{\textcolor{black}{\mathbb{E}_{q_\phi}}} \left[\operatorname{D_{KL}}\left(\textcolor{black}{q_\phi(\textcolor{black}{z}^{l}_{q,k,v}\mid {\mathbf{h}^l_{q,k,v}}, \mathbf{b})}|| p_\theta(\textcolor{black}{z}^{l}_{q,k,v}\mid {{\mathbf{h}^l_{q,k,v}}})\right)\right] 
	\end{split}
\end{equation}
where for clarity, we utilize the subscript $q,k,v$ to denote the same operator performed on query/key/value. 
 $\operatorname{D_{KL}}(\cdot||\cdot)$ denotes the Kullback–Leibler (KL) divergence between two distributions. 
By maximizing the variational lower bound above, we can optimize the network parameters to indirectly maximize the log-likelihood value of correspondence labels. 
Eq.~\ref{lb}  indicates that the discriminative, label-dependent feature posterior explicitly constrains the prior by reducing their KL divergence in the training phase, which promotes the query, key, and value features sampled from the prior to model the high-quality long-term dependencies at test time.

\subsection{Voting-based Inlier Searching} 
With the learned correspondence features above, we then propose a voting-based sampling strategy to search the desired inlier subset from the entire putative correspondences for optimal transformation estimation. 
Our sampling mechanism is deterministic and efficient. 
We first select a set of high-confidence seeds $\mathcal{C}_{seed}$ from $\mathcal{C}$ based on their inlier confidence (predicted in  \S~\ref{vbnonlocal}) and the Non-Maximum Suppression (as performed in~\cite{bai2021pointdsc}), where the number of seeds $|\mathcal{C}_{seed}|=\lfloor|C| * v\rfloor$ ($v$ is a fixed seed ratio). 
Then, for each seed, we cluster its most compatible correspondences into it to form the \textit{hypothetical inliers}. 
Ideally, if the seed is an inlier and the compatible value is correct (without ambiguity problem as in Fig.~\ref{fig:model}), its most compatible correspondences are also inliers, thus we can cluster the desired inlier subset successfully. 
To cluster high-confidence \textit{hypothetical inliers}, we perform a coarse-to-fine clustering strategy, including the voting-based coarse clustering and the Wilson score-based fine clustering. 

\noindent\textbf{Voting-based Coarse Clustering.} 
We view the correspondence features $\{\tilde{\mathbf{F}}^1,...,\tilde{\mathbf{F}}^L\}$ learned from all non-local iterations as $L$ voters. 
For $l$-th voter, it deterministically clusters $\kappa-1$  most compatible correspondences ${}^{(\kappa-1)}\mathcal{A}^l_{\mathbf{c}_i}$ for each seed $\mathbf{c}_i\in\mathcal{C}_{seed}$ to from the \textit{hypothetical inliers} ${}^{(\kappa)}\mathcal{M}^l_{\mathbf{c}_i}=\{\mathbf{c}_i\}\cup{}^{(\kappa-1)}\mathcal{A}^l_{\mathbf{c}_i}$. 
To this end, 
we first compute the compatibility matrix $\mathbf{S}^l \in \mathbb{R}^{|\mathcal{C}|\times|\mathcal{C}|}$ for $l$-th voter, where each entry $\mathbf{S}^l_{i,j}=\operatorname{clip}(1-(1-cos(\tilde{\mathbf{F}}_i^{l}, \tilde{\mathbf{F}}_j^{l})/\sigma^2), 0, 1)\cdot{\boldsymbol{\beta}}_{i,j}$, where the parameter $\sigma$ is to control the feature compatibility and geometric compatibility ${\boldsymbol{\beta}}_{i,j}$ is defined in Eq.~\ref{compat1}. 
Thus, ${}^{(\kappa-1)}\mathcal{A}^l_{\mathbf{c}_i}$ can be determined as $\{\mathbf{c}_j\mid \mathbf{S}^l_{i,j} \geq \gamma_{\kappa-1} \}$, where threshold $\gamma_{\kappa-1}$ is the $(\kappa-1)$-th largest compatibility. 
Benefitting from our discriminative feature representations, $\mathbf{S}^l$ can effectively handle the ambiguity problem in Fig.~\ref{fig:model} and promote robust inlier clustering. 
After the voting process above, each seed $\mathbf{c}_i$ can achieve $L$  candidate \textit{hypothetical inliers} $\left\{{}^{(\kappa)}\mathcal{M}^1_{\mathbf{c}_i},...,{}^{(\kappa)}\mathcal{M}^L_{\mathbf{c}_i}\right\}$. 
%
%

\noindent\textbf{Wilson score-based Fine Clustering.} 
Based on the voted \textit{hypothetical inliers} above, we then fuse these voted results with Wilson score~\cite{wilson1927probable} to form the high-quality \textit{hypothetical inliers} ${}^{(\kappa)}\tilde{\mathcal{M}}_{\mathbf{c}_i}$. 
In detail. we denote $\mathcal{I}^l_{i,j}=\mathbb{I}\{\mathbf{c}_j \in {}^{(\kappa)}\mathcal{M}^l_{\mathbf{c}_i}\}$ to indicate 
whether ${}^{(\kappa)}\mathcal{M}^l_{\mathbf{c}_i}$ contains $\mathbf{c}_j$. 
The Wilson score $\mathbf{W}^n_{i,j}$ of accepting $\mathbf{c}_j$ into ${}^{(\kappa)}\tilde{\mathcal{M}}_{\mathbf{c}_i}$ can be computed as:
\begin{equation}\label{wilson}\small
	\setlength{\abovedisplayskip}{2pt}
	\setlength{\belowdisplayskip}{2pt}
	\begin{split}
		\mathbf{W}^n_{i,j} 
		= \frac{1}{1+\frac{z^2}{n}}\Bigg[\hat{p}^{(n)}_{i,j}+\frac{z^2}{2 n}-z\sqrt{\frac{\hat{p}^{(n)}_{i,j}(1-\hat{p}^{(n)}_{i,j})}{n}+\frac{z^2}{4 n^2}}\Bigg],
	\end{split}
\end{equation} 
where $\hat{p}^{(n)}_{i,j}=\frac{1}{n}\sum^n_{\tau=1}\mathcal{I}^\tau_{i,j}$ is the average acceptation ratio of top $n$ voters and  $z=1.96$ (\ie, z-score at 95\% confidence level). 
Eq.~\ref{wilson} indicates that the Wilson score not only considers the sample mean but also the confidence (positively related to sample number $n$). 
Finally, among the set of Wilson scores under different sample numbers $\{\mathbf{W}^1_{i,j},...,\mathbf{W}^L_{i,j} \}$, we choose the largest one as the final Wilson score $\tilde{\mathbf{W}}_{i,j}=\underset{n\in\{1,...,L\}}{\max}\mathbf{W}^n_{i,j}$ for $\mathbf{c}_j$. 
Thus, the final \textit{hypothetical inliers} set of $\mathbf{c}_i$ can be determined as ${}^{(\kappa)}\tilde{\mathcal{M}}_{\mathbf{c}_i}=\{\mathbf{c}_i\}\cup\{\mathbf{c}_j \mid \tilde{\mathbf{W}}_{i,j} \geq \gamma'_{\kappa-1}\}$, where  $\gamma'_{\kappa-1}$ is the ($\kappa-1$)-th largest Wilson score. 

\noindent\textbf{Theoretical Analysis and Conservative Seed Selection.} 
Finally, we try to theoretically analyze our deterministic inlier searching mechanism compared to RANSAC~\cite{fischler1981random} and further propose a simple but effective conservative seed selection strategy for more robust 3D registration in sparse point clouds. 
We let $\{{}^{(\kappa)}\mathcal{M}^{sac}_i\}_{i=1}^J$ be the randomly sampled \textit{hypothetical inlier} subset in RANSAC, and let  $\mathcal{C}_{in}$, $\mathcal{C}_{out}$ and $p_{in}$ be the inlier subset, outlier subset and the inlier ratio, respectively, $\mathcal{C}=\mathcal{C}_{in}\cup\mathcal{C}_{out}$, $p_{in}={|\mathcal{C}_{in}|}/{|\mathcal{C}|}$. 
We also denote the inliers in seeds as $\tilde{\mathcal{C}}_{in}=\mathcal{C}_{in}\cap\mathcal{C}_{seed}$. Then, we can derive the following theorem (Please refer to Appendix~\ref{proof2} for the derivation process). 
\begin{theorem}  \label{theorem1}
	Assume the number of outliers in ${}^{(\kappa)}\tilde{\mathcal{M}}_{\mathbf{c}_i}$ ($\mathbf{c}_i\in\tilde{\mathcal{C}}_{in}$) follows a Poisson distribution $Pois\left(\alpha \cdot \kappa\right)$. Then, if $\alpha < -\frac{1}{\kappa} \cdot \log \left[1 - (1 - p_{in}^\kappa)^{J/|\tilde{\mathcal{C}}_{in}|}\right]\triangleq\mathcal{U}$, the probability of our method achieving the inlier subset is greater than or equal to that of RANSAC. 
	\begin{equation}
			\setlength{\abovedisplayskip}{2pt}
		\setlength{\belowdisplayskip}{2pt}
		\begin{split}\label{neq1}
			&P\Big(\max_{\mathbf{c}_i \in \mathcal{C}_{seed}} |{}^{(\kappa)}\tilde{\mathcal{M}}_{\mathbf{c}_i}\cap \mathcal{C}_{in}| =\kappa\Big) \geq \\ &P\Big(\max_{1\leq i\leq J} |{}^{(\kappa)}\mathcal{M}^{sac}_i\cap \mathcal{C}_{in}| =\kappa\Big).
		\end{split}
	\end{equation}
\end{theorem}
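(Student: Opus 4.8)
The plan is to reduce both sides of the inequality to closed-form ``at least one success'' probabilities and then compare them through a single chain of monotone transformations. First I would observe that on the left-hand side only inlier seeds can contribute: if $\mathbf{c}_i \in \mathcal{C}_{seed} \setminus \tilde{\mathcal{C}}_{in}$ is an outlier, then since $\mathbf{c}_i \in {}^{(\kappa)}\tilde{\mathcal{M}}_{\mathbf{c}_i}$ by construction but $\mathbf{c}_i \notin \mathcal{C}_{in}$, we must have $|{}^{(\kappa)}\tilde{\mathcal{M}}_{\mathbf{c}_i} \cap \mathcal{C}_{in}| \le \kappa - 1$. Hence the event $\{\max_{\mathbf{c}_i \in \mathcal{C}_{seed}} |{}^{(\kappa)}\tilde{\mathcal{M}}_{\mathbf{c}_i} \cap \mathcal{C}_{in}| = \kappa\}$ coincides with the event that some inlier seed $\mathbf{c}_i \in \tilde{\mathcal{C}}_{in}$ yields a hypothetical set of size $\kappa$ containing zero outliers, and it suffices to work with the latter.

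Second, I would evaluate each side in closed form. For a fixed inlier seed $\mathbf{c}_i \in \tilde{\mathcal{C}}_{in}$, the Poisson hypothesis gives $P(\text{zero outliers in } {}^{(\kappa)}\tilde{\mathcal{M}}_{\mathbf{c}_i}) = P(\text{Pois}(\alpha\kappa) = 0) = e^{-\alpha\kappa}$. Treating the zero-outlier events across the $|\tilde{\mathcal{C}}_{in}|$ inlier seeds as independent, the left-hand probability is $P_{\text{ours}} = 1 - (1 - e^{-\alpha\kappa})^{|\tilde{\mathcal{C}}_{in}|}$. On the RANSAC side, modeling the $J$ sampled subsets as independent, each all-inlier with probability $p_{in}^\kappa$ (the chance that all $\kappa$ drawn correspondences lie in $\mathcal{C}_{in}$), gives $P_{\text{sac}} = 1 - (1 - p_{in}^\kappa)^J$.

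Third, I would isolate $\alpha$. The target inequality is equivalent to $(1 - e^{-\alpha\kappa})^{|\tilde{\mathcal{C}}_{in}|} \le (1 - p_{in}^\kappa)^J$. Since $x \mapsto x^{1/|\tilde{\mathcal{C}}_{in}|}$ is increasing on $[0,1]$, raising both sides to the power $1/|\tilde{\mathcal{C}}_{in}|$ preserves the direction and yields $e^{-\alpha\kappa} \ge 1 - (1 - p_{in}^\kappa)^{J/|\tilde{\mathcal{C}}_{in}|}$. Taking logarithms (the right-hand side lies in $(0,1)$ whenever $0 < p_{in} < 1$, so the logarithm is well defined and negative) and dividing by $-\kappa < 0$ flips the direction, giving exactly $\alpha \le -\frac{1}{\kappa}\log[1 - (1 - p_{in}^\kappa)^{J/|\tilde{\mathcal{C}}_{in}|}] = \mathcal{U}$. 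Thus the hypothesis $\alpha < \mathcal{U}$ implies $P_{\text{ours}} > P_{\text{sac}}$, which establishes the claim.

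The main obstacle is not the algebra, which is a routine sequence of monotone steps, but justifying the two probabilistic models that feed it. I expect the delicate points to be (i) the independence of the zero-outlier events across the inlier seeds, since distinct seeds draw their compatible correspondences from overlapping pools and are genuinely correlated, and (ii) the per-sample all-inlier probability $p_{in}^\kappa$ for RANSAC, which is exact only under sampling with replacement (or when $|\mathcal{C}|$ is large relative to $\kappa$) and otherwise is a product of without-replacement ratios. I would state both as explicit modeling assumptions, emphasize that the Poisson hypothesis on outlier counts is precisely what collapses the per-seed success probability to the clean factor $e^{-\alpha\kappa}$, and remark that the qualitative comparison survives replacing $p_{in}^\kappa$ by its exact hypergeometric counterpart at the cost of a less tidy threshold.
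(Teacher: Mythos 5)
Your proposal is correct and follows essentially the same route as the paper's proof: restrict attention to inlier seeds, use the i.i.d.\ assumption across seeds together with the Poisson hypothesis to obtain $1-(1-e^{-\alpha\kappa})^{|\tilde{\mathcal{C}}_{in}|}$, compare against $1-(1-p_{in}^\kappa)^J$ for RANSAC, and solve the resulting inequality for $\alpha$ by the same monotone steps. Your delicate point (ii) is in fact already absorbed by the paper with one line: the exact without-replacement all-inlier probability satisfies $\frac{|\mathcal{C}_{in}|\cdots(|\mathcal{C}_{in}|-\kappa+1)}{|\mathcal{C}|\cdots(|\mathcal{C}|-\kappa+1)} \le p_{in}^\kappa$, so RANSAC's success probability is bounded above by $1-(1-p_{in}^\kappa)^J$ and the identical threshold $\mathcal{U}$ works unchanged, with no less tidy threshold required.
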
 
Theorem~\ref{theorem1} shows that as the inlier ratio $p_{in}$ degrades, the upper bound $\mathcal{U}$ will increase, indicating our method tends to be more likely to achieve the inlier subset than RANSAC in high outlier cases. 
Therefore, our method tends to be more robust to outliers than RANSAC. 
In addition, Theorem~\ref{theorem1}  also shows that the more inliers $|\tilde{\mathcal{C}}_{in}|$ in $\mathcal{C}_{seed}$, the better for our method. 
It means selecting all correspondences in $\mathcal{C}$ as seeds theoretically seems the best choice since it can avoid missing any inlier. 
However, in real implementation, we have to perform inlier selection to accelerate the registration speed. 
Nevertheless, in sparse point clouds, the number of correspondences is small. 
If we still perform the original inlier selection, fewer seeds even may not contain any inlier, leading to registration failure. 
To overcome it, we develop a conservative seed selection strategy, which changes the seed number to $\max\{\lfloor v\cdot|\mathcal{C}|\rfloor, n\} (n < |\mathcal{C}|)$, where the lower bar $n$ of the seed number can effectively avoid selecting too fewer inliers in sparse cases and in dense cases, it would degrade to the original selection strategy.

\subsection{Rigid Transformation Estimation} 
With the voted \textit{hypothetical inliers} $\{{}^{(\kappa)}\tilde{\mathcal{M}}_{\mathbf{c}_i}\mid \mathbf{c}_i \in \mathcal{C}_{seed}\}$ above, we estimate the optimal transformation parameter based on the \textit{Procrustes method}~\cite{gower1975generalized} to minimize the least-squares errors for each \textit{hypothetical inlier} group:
\begin{equation}\small
		\setlength{\abovedisplayskip}{2pt}
	\setlength{\belowdisplayskip}{2pt}
	\begin{split}
		\hat{\mathbf{R}}_i, \hat{\mathbf{t}}_i = \underset{\mathbf{R}, \mathbf{t}}{\arg\min} \sum_{\mathbf{c}_j \in {}^{(\kappa)}\tilde{\mathcal{M}}_{\mathbf{c}_i}} \omega_j \cdot\|\mathbf{R}^\top\mathbf{x}_j + \mathbf{t} - \mathbf{y}_j\|_2,
	\end{split}
\end{equation} 
where $\omega_j$ is the error weight computed by the neural spectral matching as in~\cite{bai2021pointdsc}.  
Then, we select the transformation parameter that maximizes the number of overlapped correspondences as the final optimal transformation estimation:
\begin{equation}\small
		\setlength{\abovedisplayskip}{2pt}
	\setlength{\belowdisplayskip}{0pt}
	\begin{split}
		\hat{\mathbf{R}}^*, \hat{\mathbf{t}}^* = \underset{\{\hat{\mathbf{R}}_i, \hat{\mathbf{t}}_i\}_{i=1}^{|\mathcal{C}|} }{\arg\max} \sum_{\mathbf{c}_j \in \mathcal{C}} v_i \cdot\mathbb{I}\left\{ \|\hat{\mathbf{R}}_i^\top\mathbf{x}_j + \hat{\mathbf{t}}_i - \mathbf{y}_j\|_2<\varepsilon \right\},
	\end{split}
\end{equation}
where $v_i=1-\|\hat{\mathbf{R}}_i^\top\mathbf{x}_j + \hat{\mathbf{t}}_i - \mathbf{y}_j\|_2^2 / \varepsilon^2$ is used to re-weight the inlier count as performed in \cite{jiang2021sampling}.
Finally, we refine it using all recovered inliers in a least-squares optimization as a common practice in~\cite{bai2021pointdsc,barath2018graph}.

\begin{table*}[t]
	\centering
	\resizebox{1.0\linewidth}{!}{
		\begin{tabular}{lccc|ccc|ccc|ccc|c}
			\toprule[1.5pt]
			& \multicolumn{3}{c|}{\textbf{3DMatch} ({FCGF})} & \multicolumn{3}{c|}{\textbf{3DMatch} ({FPFH})} & \multicolumn{3}{c|}{\textbf{KITTI} ({FCGF})} & \multicolumn{3}{c|}{\textbf{KITTI} ({FPFH})} & \\
			Models & RR($\uparrow$)  & RE($\downarrow$) & TE($\downarrow$) & RR($\uparrow$)  & RE($\downarrow$) & TE($\downarrow$) & RR($\uparrow$)  & RE($\downarrow$) & TE($\downarrow$) & RR($\uparrow$)  & RE($\downarrow$) & TE($\downarrow$) &   Sec.          \\ \midrule 
			FGR~\cite{zhou2016fast} & 79.17 & 2.93 & 8.56 & 41.10 & 4.05 & 10.09 & 96.58 & {0.38} & 22.30 & 1.26 & 1.69 & 47.18 & 1.39 \\
			SM~\cite{leordeanu2005spectral} & 86.57 & 2.29 & 7.07 & 55.82 & 2.94 & 8.13 &96.58 & 0.50 & \underline{19.88} & 75.50  & {0.66} & 15.01 & 0.02 \\
			RANSAC~\cite{fischler1981random}  & 91.50 & 2.49 & 7.54 & 73.57 & 3.55 & 10.04 & 97.66 & {\textbf{0.28}} & 22.61 & 89.37 & 1.22 & 25.88 & 6.43  \\
			TEASER++~\cite{yang2020teaser} & 85.77 & {2.73} & {8.66} & 75.48 & {2.48} & 7.31 & 83.24 & 0.84 & \textbf{12.48} & 64.14 & {1.04} & 14.85 & 0.07 \\
			DGR~\cite{choy2020deep} & 91.30 & 2.40 & 7.48 & 69.13 & 3.78 & 10.80 & 95.14 & {0.43} & 23.28 & 73.69 & 1.67 & 34.74 & 1.36 \\ 
			DHVR~\cite{lee2021deep} & 89.40 & 2.19 & 6.95 & 67.10 & 2.56 & 7.67 & -- & --  & --  & --  & --  & -- & 0.40 \\
   			SC2\_PCR~\cite{chen2022sc2}   & {\underline{93.10}} & \textbf{2.04} & \underline{6.53} & {\textbf{83.92}} & \underline{2.09} &\underline{6.66} & \underline{97.48} & 0.33 & {{20.66}} &{97.84} & {\underline{0.39}} & 9.09 & 0.09 \\
			PointDSC~\cite{bai2021pointdsc}   & 92.42 & \underline{{2.05}}  & {\textbf{6.49}} & 77.51 & {\textbf{2.08}}  &{\textbf{6.51}} & 97.66 & 0.47 & {{20.88}} & \underline{98.20} & {{0.58}} & \underline{7.27} & 0.11 \\
			\midrule
			\textbf{VBReg} & {\textbf{93.53}} & \textbf{2.04} & \textbf{{6.49}} &  {\underline{82.75}} & {{2.14}} & {{6.77}}& {\textbf{98.02}} & \underline{0.32} & {20.91} & {\textbf{98.92}} & \textbf{0.32} & {\textbf{7.17}} & 0.22 \\
			\bottomrule[1.5pt]
	\end{tabular}}
	\vspace{-2mm}
	\caption{Quantitative comparison on {3DMatch}~\cite{zeng20173dmatch} and {KITTI}~\cite{geiger2012we} benchmark datasets with descriptors {FCGF} and {FPFH}. The registration speed is achieved by computing  the averaged time cost on {3DMatch} with FCGF descriptor.}\label{tab:modelnet40}
	\vspace{-2mm}
	\label{3dmatch2}
\end{table*}

\section{Experiments}
\subsection{Experimental Settings}
\noindent\textbf{Implementation Details.}\label{sec:detail} 
For our variational non-local module, the number of iterations $L$ is 12, and the dimensions of the correspondence feature, random feature, and hidden feature are set to 128, 128, and 256, respectively. 
For our voting-based inlier sampling module, the size of \textit{hypothetical inliers} $\kappa$ is 40. 
For seed selection, seed ratio $v$ is $0.1$  and the lower bar of seed number $n$ is 1000. 
Our model is trained with $50$ epochs using Adam optimizer with learning rate $10^{-4}$ and weight decay $10^{-6}$.  
We utilize PyTorch to implement our project and perform all experiments on the server equipped with an Intel i5 2.2 GHz CPU and one Tesla V100 GPU. 
For simplicity, we name our \textbf{V}ariational \textbf{B}ayesian-based \textbf{Reg}istration framework as   \textbf{VBReg}. 

\noindent\textbf{Evaluation Metric.}
We use three metrics to evaluate our method, including (1) {Registration Recall (\textit{RR})}, the percent of the successful registration satisfying the error thresholds of  rotation and translation at the same time, (2) {average  Rotation Error (\textit{RE})} and (3) {average  Translation Error (\textit{TE})}:
\begin{equation}\small
		\setlength{\abovedisplayskip}{1pt}
	\setlength{\belowdisplayskip}{1pt}
	\begin{split}
	\operatorname{RE}(\hat{\mathbf{R}})=\arccos \frac{\operatorname{Tr}\left(\hat{\mathbf{R}}^\top \mathbf{R}^{*}\right)-1}{2},	\mathrm{TE}(\hat{\mathbf{t}})=\left\|\hat{\mathbf{t}}-\mathbf{t}^{*}\right\|_{2}^{2},
	\vspace{-2mm}
\end{split}
\end{equation}
where $\hat{\mathbf{R}}$ and $\hat{\mathbf{t}}$ are the predicted rotation matrix and rotation vector, respectively, while ${\mathbf{R}}^*$ and ${\mathbf{t}}^*$ are the corresponding ground truth. The average \textit{RE} and \textit{TE} are computed only on successful aligned point cloud pairs.

\begin{table}[]
	\centering
	\resizebox{1.0\linewidth}{!}{
		\begin{tabular}{llccccc}
			\toprule[1.5pt]
			Feature & Model &  5000  & 2500 & 1000 & 500 & 250\\ \midrule 
			\multirow{9}{*}{{FCGF}} 
			&FGR~\cite{zhou2016fast} & 18.6 & 19.4 & 16.9 & 16.0 & 12.4 \\
			&SM~\cite{leordeanu2005spectral} & 32.4 & 31.3 & 31.4 & 28.0 & 23.5   \\ 	
			&RANSAC~\cite{fischler1981random}  & 37.6 & 37.2 & 35.9 & 32.1 & 25.9 \\
			& TEASER++~\cite{yang2020teaser} & 42.8 & 42.4 & 39.5 & 34.5 & 25.7 \\
			& DHVR~\cite{lee2021deep} & 50.4 & 49.6 & 46.4 & 41.0 & \underline{34.6} \\
			&SC2\_PCR~\cite{chen2022sc2}  & {\underline{57.4}} & {\underline{56.5}} & {\underline{51.8}} & {\underline{46.4}} & {\textbf{36.2}}  \\
			&TR\_DE~\cite{chen2022deterministic} & {49.5} & {50.4} & {{48.4}} & {{43.4}} & {{34.3}} \\
			&PointDSC~\cite{bai2021pointdsc}   & {55.8} & {52.6}  & 46.8 & 37.7 & 26.7 \\ 
			\cmidrule{2-7}
			& \textbf{VBReg} & \textbf{58.3} & \textbf{56.8} & \textbf{52.9} & \textbf{47.2} & {34.5} \\
			\cmidrule{1-7}
			\multirow{9}{*}{{Predator}} 
			&FGR~\cite{zhou2016fast} & 36.4 & 38.2 & 39.7 & 39.6 & 38.0 \\
			&SM~\cite{leordeanu2005spectral} & 53.8 & 55.1 & 55.4 & 54.5 & 50.2 \\ 		
			&RANSAC~\cite{fischler1981random} & 62.3 & 62.8 & 62.4 & 61.5 & 58.2  \\ 
			&TEASER++~\cite{yang2020teaser} & 62.9 & 62.6 & 61.9 & 59.0 & 56.7\\
			& DHVR~\cite{lee2021deep} & 67.2 & {67.3} & 66.1 & 64.6 & 60.5 \\
			&SC2\_PCR~\cite{chen2022sc2} & \underline{69.5} & \underline{69.5} & \underline{{68.6}} & \underline{{65.2}} & \underline{{62.0}} \\
			&TR\_DE~\cite{chen2022deterministic} & {64.0} & {64.8} & {{61.7}} & {{58.8}} & {{56.5}} \\
			&PointDSC~\cite{bai2021pointdsc} & 68.1 & {67.3}  & 66.5 & 63.4 & 60.5 \\
			\cmidrule{2-7}
			&\textbf{VBReg} & \textbf{69.9} & \textbf{69.8}  & \textbf{68.7}  & \textbf{66.4} & \textbf{63.0}  \\
			\bottomrule[1.5pt]
	\end{tabular}}
	\vspace{-2mm}
	\caption{Registration recall (\textit{RR}) with different numbers of points on {3DLoMatch} benchmark dataset~\cite{huang2021predator}.}	\label{table:loindoor}
\end{table}

\subsection{Comparison with Existing Methods}
\noindent\textbf{Evaluation on 3DMatch.} We first evaluate our method on {3DMatch} benchmark~\cite{zeng20173dmatch}, which contains 46 training scenes, 8 validation scenes, and 8 test scenes. We first voxelize and down-sample the point cloud with 5cm voxel size and then leverage FCGF~\cite{choy2019fully} and FPFH~\cite{rusu2009fast} descriptors to construct the putative correspondences based on the feature nearest neighbor. 
We compare our method with eight state-of-the-art (SOTA) correspondence-based methods, where FGR~\cite{zhou2016fast}, SM~\cite{leordeanu2005spectral}, RANSAC (50k)~\cite{fischler1981random}, TEASER++~\cite{yang2020teaser}, and SC2\_PCR~\cite{chen2022sc2} are representative traditional methods, while DGR~\cite{choy2020deep}, DHVR~\cite{lee2021deep}, and PointDSC~\cite{bai2021pointdsc} are advanced deep learning-based methods. 
As shown in Table~\ref{3dmatch2}, in the FCGF setting, our method achieves the best performance in \textit{RR} and \textit{RE}  criteria while the same \textit{TE} with PointDSC. 
We need to highlight that \textit{RR} is a more important criterion than \textit{RE} and \textit{TE} since the rotation and translation errors are just calculated in a successful registration. 
It means that the higher \textit{RR} may include more challenging but successful registration cases, potentially increasing their errors.   
In the FPFH setting, it can be observed that our method can still achieve the best \textit{RR} score among all deep methods, but perform slightly worse than SC2\_PCR. 
Notably, compared to PointDSC (our baseline), the precision gain on \textit{RR} is impressive (5.24\%$\uparrow$), which benefits from the effectiveness of our variational non-local feature learning and the voting-based inlier searching. 

 \begin{figure}[t]
	\centering
	\includegraphics[width=\columnwidth]{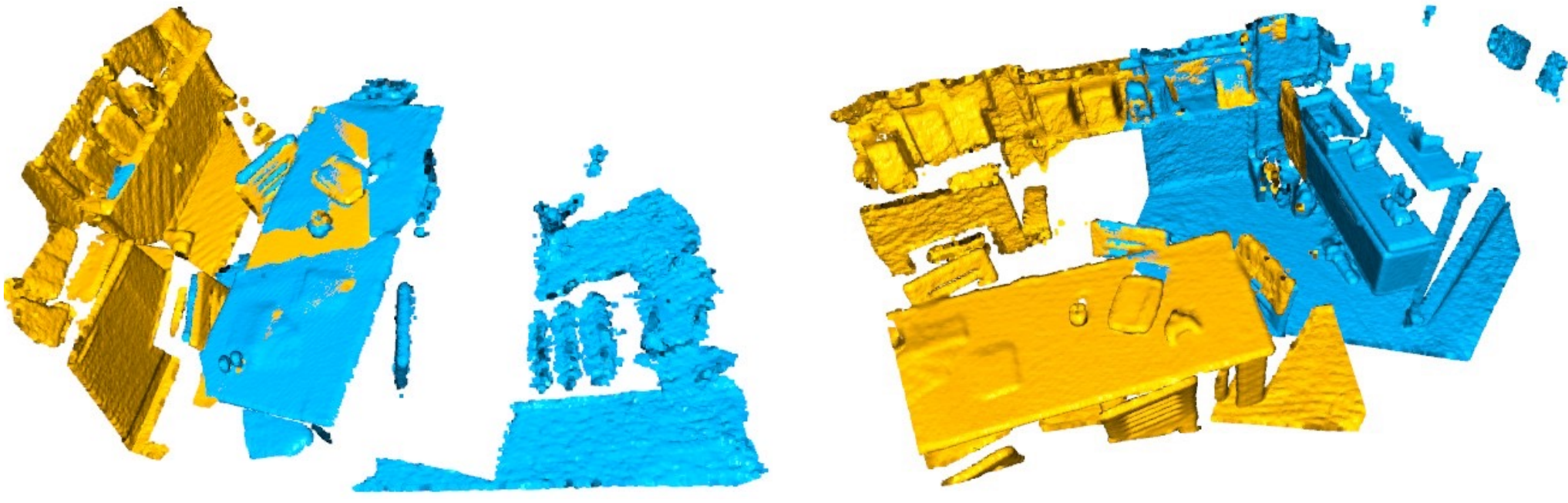}
	\vspace{-4mm}
	\caption{Registration visualization on {3DLoMatch}~\cite{geiger2012we}.} \label{fig:3dlomatch}
	\vspace{-6mm}
\end{figure}
 
\noindent\textbf{Evaluation on 3DLoMatch.}
We further test our method on {3DLoMatch} benchmark~\cite{huang2021predator}. 
Compared to {3DMatch} sharing more than $30\%$ overlap, the overlaps of point cloud pairs in {3DLoMatch} just lie in $10\%\sim30\%$, thus presenting much more challenges. 
We leverage FCGF~\cite{choy2019fully} and recently popular Predator~\cite{huang2021predator} as the feature descriptors for putative correspondence generation. 
We choose six traditional methods: FGR~\cite{zhou2016fast}, SM~\cite{leordeanu2005spectral}, RANSAC (50k)~\cite{fischler1981random}, TEASER++~\cite{yang2020teaser}, SC2\_PCR~\cite{chen2022sc2}, and TR\_DE~\cite{chen2022deterministic}, and two deep methods: DHVR~\cite{lee2021deep}, and PointDSC~\cite{bai2021pointdsc} for comparison. 
The registration recalls (\textit{RR}) under different numbers of correspondences are listed in Table~\ref{table:loindoor}. 
It can be observed that regardless of FCGF or Predator descriptor, our method almost achieves the best performance on all settings, except for FCGF setting with 250 points. 
Notably, in more challenging cases, the performance advantage over PointDSC is further expanded (+9.5\% and +7.8\% in the cases of FCGF with 500 and 250 points). 
These experimental results further demonstrate the outstanding robustness of our method when encountering those extremely low-overlapping cases (high outlier ratios). 
Some visualization results are listed in Fig.~\ref{fig:3dlomatch} and the \textit{RR} changes under different inlier ratios are presented in Fig.~\ref{ablation_plots1} (c), which suggests that our performance gains are mainly brought by model robustness in extremely high outlier situations.

\begin{table*}[]
	\centering
	\resizebox{1.0\textwidth}{!}{
		\begin{tabular}{lcc|ccccc|ccccc|c}
			\toprule[1.5pt]
			&\multicolumn{2}{c|}{\textbf{3DMatch}}& \multicolumn{5}{c|}{\textbf{3DLoMatch} (FCGF)} & \multicolumn{5}{c|}{\textbf{3DLoMatch} (Predator)}  \\ 
			{Model} & {FCGF} & {FPFH} & 5000 & 2500 & 1000 & 500 & 250 & 5000 & 2500 & 1000 & 500 & 250 & Sec. \\
			\midrule 
			PointDSC \textit{w/} SCNonlocal$^{xyz}$ & {92.42} & {77.51} & {55.8} & 52.6 & 46.8 & 37.7 & 26.7 & 68.1 & 67.3  & 66.5 & 63.4 & 60.5 & 0.11 \\ 
			PointDSC \textit{w/} VBNonlocal$^{xyz}$ & 93.04 & {80.16} & \underline{57.7} & 55.6 & 50.2 & 39.9 & 26.1 & \underline{69.7} & \underline{69.6}  & 67.9 & 64.9 & \underline{61.9} & 0.17 \\ 
			 \cdashline{1-14}[1pt/2pt]
			 PointDSC \textit{w/} SCNonlocal$^{feat}$ & 92.36 & 77.76 & 54.6 & 50.6 & 44.9 & 36.8 & 25.4 & 69.2 & 68.6  & 67.9 & 63.5 & 59.9 & 0.13 \\ 
            PointDSC \textit{w/} VBNonlocal$^{feat}$ &  {93.04} & {80.53}& {56.9} & \underline{56.5} & 50.9 & 42.2 & 28.9 & 69.2 &  68.7 & 68.0 & 64.6 & 60.6 & 0.18 \\  
			\cdashline{1-14}[1pt/2pt]
			PointDSC \textit{w/} SCNonlocal$^{cls}$ & {92.98} & {78.99} & {54.1} & 52.2 & 46.0 & 38.7 & {27.7} & 67.6 & 66.9 & 67.2 & 63.7 & {60.2}& 0.11\\  
			PointDSC \textit{w/} VBNonlocal$^{feat}$+Vote & \underline{93.41} & \underline{81.21} & \textbf{58.3} & \underline{56.5} & \underline{51.9} & \underline{44.7} & \underline{31.1} & {69.3} & {69.5} & \underline{68.2} & \underline{65.3} & {61.2} & 0.20\\  
			PointDSC \textit{w/} VBNonlocal$^{feat}$+Vote+CS& \textbf{93.53} & \textbf{82.75} &\textbf{58.3} & \textbf{56.8} & \textbf{52.9} & \textbf{47.2} & \textbf{34.5} & \textbf{69.9} & \textbf{69.8}  & \textbf{68.7}  & \textbf{66.4} & \textbf{63.0} & 0.22 \\ 
			\midrule
			Iteration times $L = 6$ & \underline{93.41} & \underline{82.32} & {58.1} & \textbf{57.1} & \underline{52.9} & \textbf{48.3} & \textbf{34.8} & {69.7} & \underline{69.7} & \textbf{68.7} & {66.3} & \textbf{63.8} & 0.19 \\  
			Iteration times $L = 9$ & \underline{93.41} & {81.58} & \underline{58.2} & \underline{57.0} & \textbf{53.2} & \underline{47.6} & 32.5 & \textbf{70.0} & {69.4}  & \underline{68.5} & \textbf{66.9} & \underline{63.2} & 0.20\\  
			Iteration times $L = 12$* & \textbf{93.53} & \textbf{82.75} & \textbf{58.3} & {56.8} & \underline{52.9} & {47.2} & \underline{34.5} &  \underline{69.9} & \textbf{69.8}  & \textbf{68.7}  & \underline{66.4} & {63.0}& 0.22\\  
			\midrule
			Random feat. dim. ${\tilde{d}}=32$ & \underline{93.41} & \underline{82.38} & \underline{58.0} & \textbf{57.5} & \textbf{53.5} & \underline{48.1} & \underline{34.8} & \underline{69.7} & \textbf{70.0}  & \underline{68.6} & {66.3} & \textbf{63.3} & 0.20 \\  
			Random feat. dim. ${\tilde{d}}=64$ & \underline{93.41} & {81.45} & {57.9} & \underline{56.9} & {52.8} &\textbf{48.6} & \textbf{35.0} &  {69.6} & {69.7} & {68.4} & \textbf{66.5} & \underline{62.3} & 0.21  \\  
			Random feat. dim. ${\tilde{d}}=128$* & \textbf{93.53} & \textbf{82.75} &\textbf{58.3} & {56.8} & \underline{52.9} & {47.2} & {34.5} & \textbf{69.9} & \underline{69.8}  & \textbf{69.3}  & \underline{66.4} & \textbf{63.3} & 0.22 \\ 
			\bottomrule[1.5pt]
	\end{tabular}}
	\vspace{-2mm}
	\caption{Ablation studies on {3DMatch}~\cite{zeng20173dmatch} and {3DLoMatch}~\cite{huang2021predator} datasets. \textit{SCNonlocal}: Spatial consistency-guided non-local network; \textit{VBNonlocal}: Variational Bayesian-based non-local network;  \textit{Vote}: Voting-based inlier searching; \textit{CS}: Conservative seed selection.}
	\label{different2}
\end{table*}

\noindent\textbf{Evaluation on KITTI.} Finally, we evaluate our method on the outdoor LIDAR-scanned driving scenarios from {KITTI} dataset~\cite{geiger2012we}. 
In line with \cite{choy2019fully}, we utilize sequences 0-5, 6-7, and 8-10 as the training set, validation set, and test set, respectively. 
Also, as the setting in  \cite{bai2020d3feat,choy2019fully}, we further refine the ground-truth rigid transformations using ICP~\cite{besl1992method}  and only collect the point cloud pairs far away from each other at most 10m as the test dataset. 
We downsample the point cloud with a voxel size of 30cm and exploit FCGF~\cite{choy2019fully} and FPFH~\cite{rusu2009fast} descriptors for correspondence construction, respectively. 
The compared methods are consistent with those in \textit{3DMatch}. 
The comparison results are listed in Table \ref{3dmatch2}. 
For the FCGF setting, our method can achieve the best scores on the most important \textit{RR} criterion while for the FPFH setting, our method can consistently achieve the best scores on all criteria. 

\begin{figure}[htb]
	\centering
	\subfloat[ The distribution of inlier feature similarity on {3DMatch}~\cite{zeng20173dmatch}.]{\includegraphics[width = .49\linewidth]{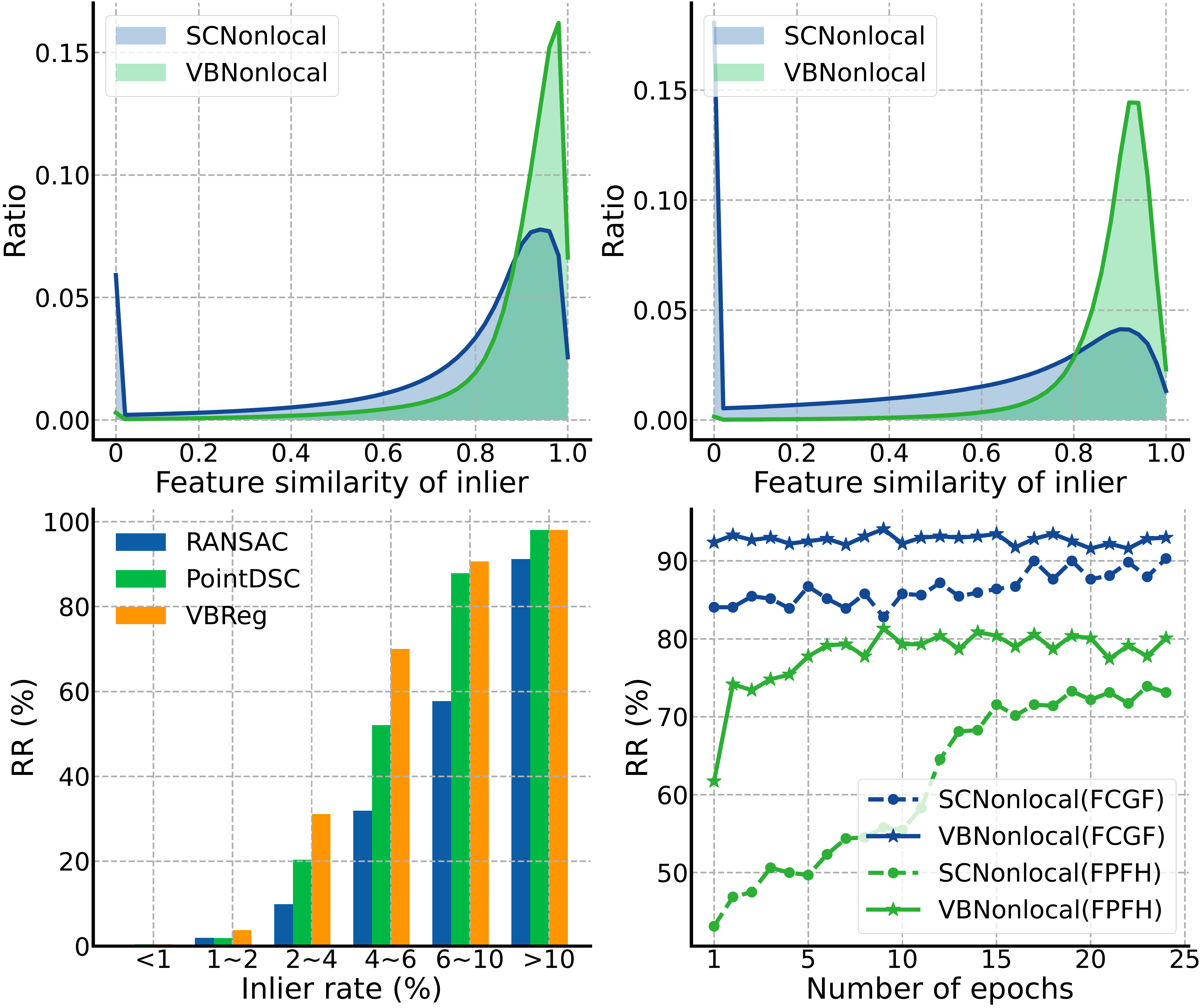}}\hspace{0.2mm}
	\subfloat[The distribution of inlier feature similarity on {3DLoMatch}~\cite{huang2021predator}.]{\includegraphics[width = .49\linewidth]{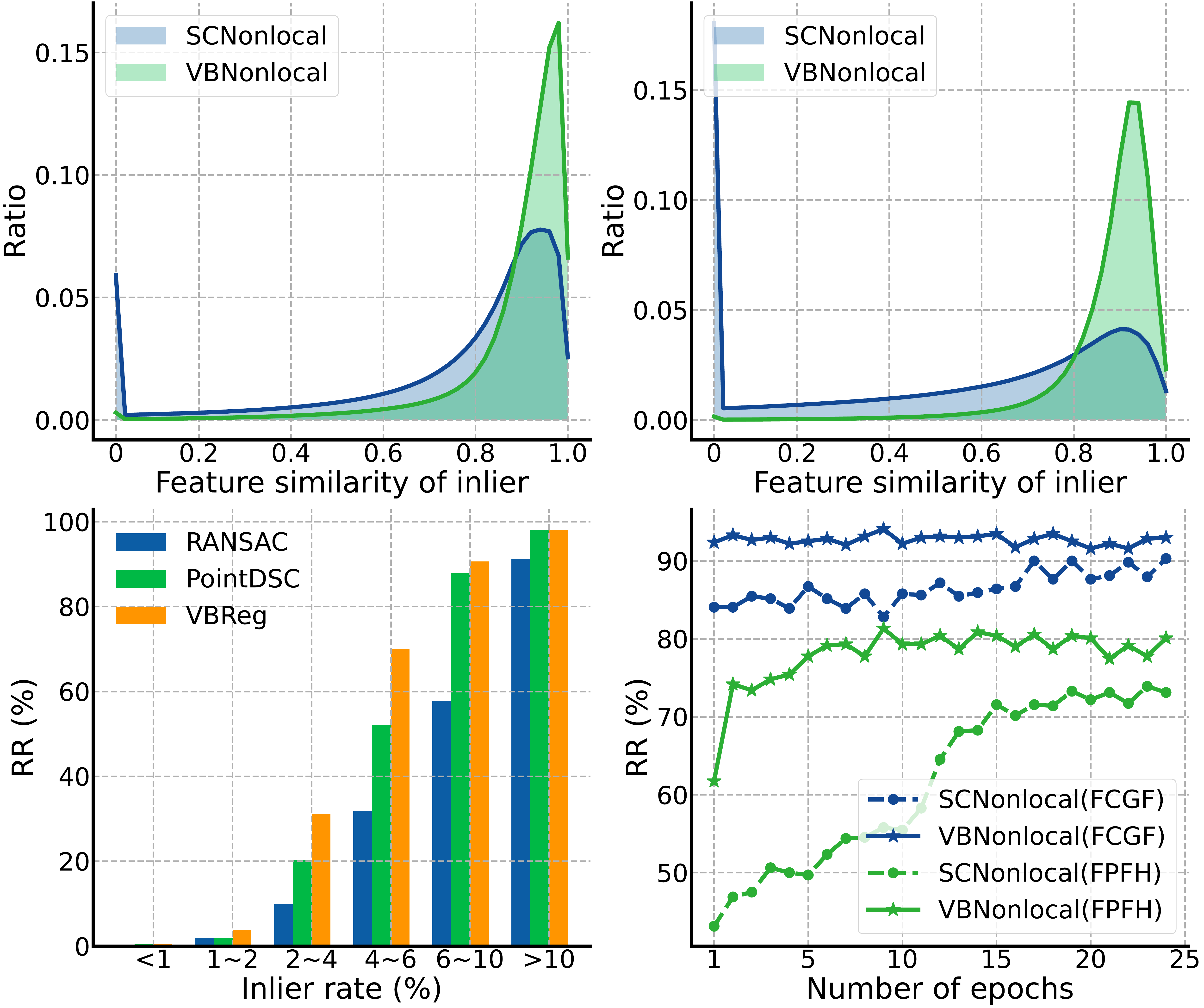}}\\
	\subfloat[\textit{RR} under different inlier ratios.]{\includegraphics[width = .49\linewidth]{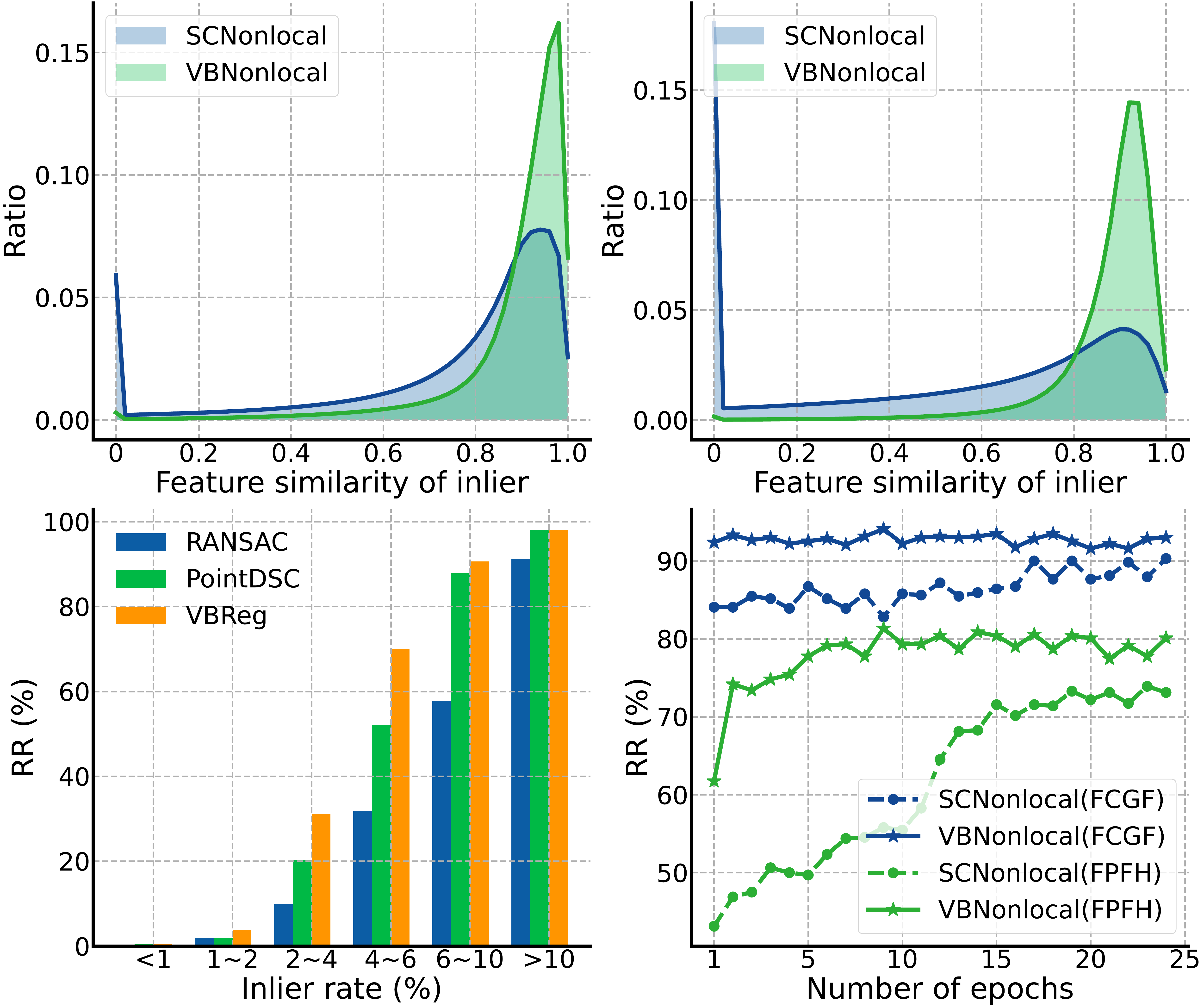}}\hspace{0.2mm}
	\subfloat[Training curves.]{\includegraphics[width = .49\linewidth]{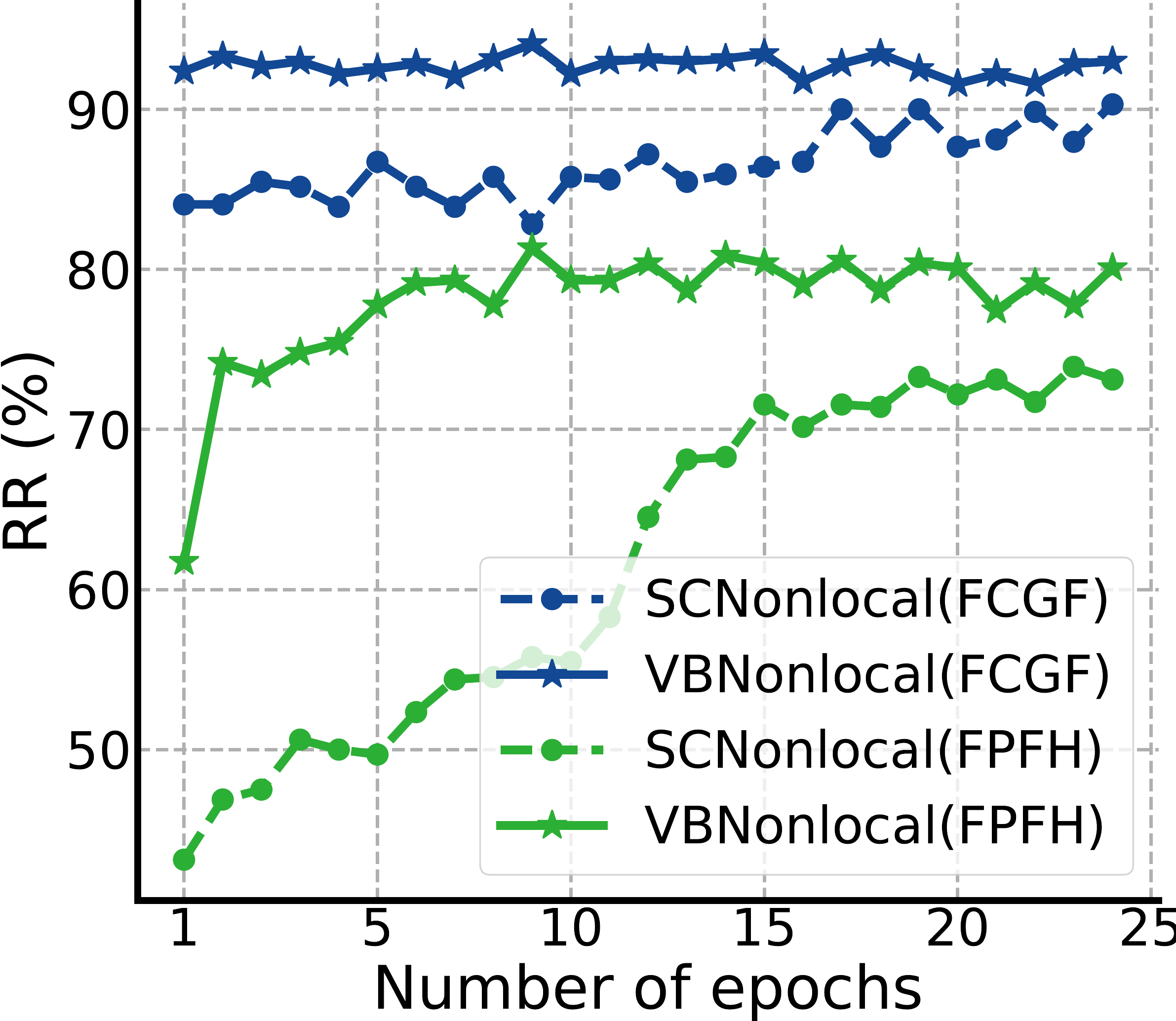}}
	\vspace{-2mm}
	\caption{(a) and (b): The distribution of the learned feature similarity of inliers; (c):  \textit{RR} under different inlier ratios; (d): \textit{RR} on the validation sets of {3DMatch}~\cite{zeng20173dmatch} (FCGF) and {3DMatch} (FPFH) under different training epochs.}
	\label{ablation_plots1}
\end{figure}

\subsection{Ablation Studies and Analysis}  
%
%
\noindent\textbf{Variational Non-local Network.} 
We first take PointDSC as our baseline and compare our proposed variational Bayesian non-local network (\textit{VBNonlocal}) to the spatial consistency-based non-local network (\textit{SCNonlocal}) to highlight the effectiveness of our proposed method.
\textbf{(1)}: We first compare their performance difference under two types of network input: \textit{VBNonlocal}$^{xyz}$ and \textit{SCNonlocal}$^{xyz}$ indicate using concatenated correspondence coordinate as input while \textit{VBNonlocal}$^{feat}$ and \textit{SCNonlocal}$^{feat}$ represent using concatenated 
coordinate and descriptor of correspondence as input. 
 As shown in the top block in Table~\ref{different2}, with each data type as input, our {\textit{VBNonlocal}} can consistently achieve significant performance gains. 
 Especially, on {3DMatch} with FPFH descriptor, {\textit{VBNonlocal}$^{feat}$} brings $2.77\%$ \textit{RR} improvement and on {3DLoMatch} with 500, 1000 and 2500 points, the \textit{RR} improvements even can reach $5.4\%$, $6\%$ and $5.9\%$, respectively. 
 These impressive results support our view that our Bayesian-driven long-range dependency modeling can effectively learn the discriminative inlier/outlier features for reliable inlier search. 
\textbf{(2)}: Then, to further highlight the superiority of our variational inference-guided feature learning, we also try to add classification loss on the features produced by each iteration in \textit{SCNonlocal} to guide their learning (denoted as \textit{SCNonlocal}$^{cls}$). 
As shown in the fifth row in Table~\ref{different2}, such loss-based label-propagation way just can achieve very limited performance gain and even sometimes degrades score. It demonstrates the superiority of our label-dependent posterior guidance for (prior) feature learning. 
Also, owing to such posterior guidance,  the training curves in Fig.~\ref{ablation_plots1} (d) show that our method can achieve significantly faster convergence speed than \textit{SCNonlocal}.

\noindent\textbf{Discriminative Feature Learning?} 
In order to verify whether \textit{VBNonlocal} can learn more discriminative features than \textit{SCNonlocal}, we visualize the distribution of the feature similarity of inliers on {3DMatch} (Fig.~\ref{ablation_plots1} (a)) and {3DLoMatch} (Fig.~\ref{ablation_plots1} (b)). 
As we can see, our learned inlier features own much higher similarities (approximate to 1) than  \textit{SCNonlocal} on both datasets, which demonstrates that our proposed Bayesian-inspired non-local network truly can promote more discriminative correspondence-feature learning.

\noindent\textbf{Variational Non-local Setting.} 
We further test the performance changes under different \textit{VBNonlocal} settings. 
\textbf{(1)} We first test the model robustness under different numbers of non-local iterations. 
The second block in Table~\ref{different2} verifies that our method is robust to the iteration time and tends to consistently achieve outstanding \textit{RR} score. 
\textbf{(2)} Then, the bottom block in Table~\ref{different2} further shows our model stability to different dimensions of random features.  


 \noindent\textbf{Voting-based Inlier Searching.} 
Furthermore, we evaluate the performance contribution of the proposed voting-based inlier searching strategy (\textit{Vote}). 
As we can see in the sixth row of Table~\ref{different2}, voting strategy can consistently achieve performance improvement regardless in {3DMatch} or in more challenging {3DLoMatch}. 
It mainly benefits from the high-quality \textit{hypothetical inliers} sampled by our voting policy. 

 \noindent\textbf{Conservative Seed Selection.} Finally, we test the effectiveness of the proposed conservative seed selection strategy (\textit{CS}) motivated by Theorem~\ref{theorem1}. 
As we can see in the seventh row of Table~\ref{different2}, \textit{CS} can achieve consistent performance gain in each setting. 
Especially, in the cases of fewer points (\eg, FCGF setting with 250 and 500 points), the improvement is much more significant (+3.4\%, +2.5\%). 
As the analysis before, in sparse point clouds, the original inlier selection strategy like in~\cite{bai2021pointdsc} is aggressive and prone to miss too many inlier seeds. Instead, our conservative selection strategy can effectively mitigate it as well as keep registration efficiency.   

\section{Conclusion} 
In this paper, we adapted the variational Bayesian inference into the non-local network and developed the effective Bayesian-guided long-term dependencies for discriminative correspondence-feature learning. 
To achieve effective variational inference, a probabilistic graphical model was customized over our non-local network, and the variational low bound was derived as the optimization objective for model training. 
In addition, we proposed a Wilson score-based voting mechanism for high-quality inlier sampling and theoretically verified its superiority over RANSAC. 
Extensive experiments on indoor/outdoor datasets demonstrated its promising performance. 

\section{Acknowledgments} 
This work was supported by the National Science Fund of China (Grant Nos. U1713208, U62276144).

{\small
\bibliographystyle{ieee_fullname}
\bibliography{PaperForReview}
}

\onecolumn 

\section*{\centering \LARGE Supplementary Material for ``Robust Outlier Rejection for 3D Registration with Variational Bayes"} 
\section{Proof of Variational Lower Bound}
\label{proof}
We propose two forms of variational lower bounds on our variational non-local network, including the point cloud-wise lower bound (\S\ref{bound1}) and the point-wise lower bound (\S\ref{bound2}).
The former demonstrates the point-cloud distribution, while the latter shows a more detailed distribution for each point. For clarity, we apply the former in our paper.

\begin{figure}[ht]
	\centering
	\includegraphics[width=0.5\columnwidth]{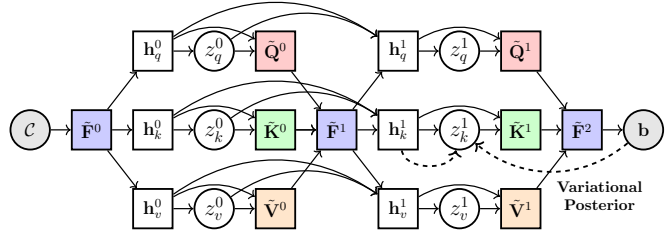}
	\caption{Probabilistic graphical model for our variational non-local network.  For simplicity, we just demonstrate two iterations. The white circles indicate the random features and the white squares denote the deterministic hidden features. 
		The solid line represents the inlier/outlier prediction process (generative process) and the dashed line denotes the label-dependent posterior encoder (inference model). 
		We just show the inference model for ${z}^1_k$.} \label{fig:model}
	\label{graph}
	\vspace{-4mm}
\end{figure}

\subsection{Point Cloud-wise Variational Lower Bound} \label{bound1}
Given the putative correspondences $\mathcal{C}=\{\mathbf{c}_1, \mathbf{c}_2, ..., \mathbf{c}_N\}$ and their inlier/outlier labels $\mathbf{b}=\{b_1, b_2, ..., b_N \mid b_i \in \{0, 1\}\}$, we first inject a set of random feature variables $z^{<L}_{q,k,v}=\{z^{l, i}_{q}, z^{l, i}_{k}, z^{l, i}_{v}\}_{0\leq l <L, 1\leq i\leq N}$ into its log-likelihood correspondence labels  $p_\theta(\mathbf{b}\mid \mathcal{C})$ and the initial variational lower bound can be derived using the Jensen’s inequality as follows: 
\begin{equation}\label{init1}
	\begin{split}
		&\ln p_\theta(\mathbf{b}\mid \mathcal{C}) = \ln \int_{z^{<L}_{q,k,v}} p_\theta(\mathbf{b}, z^{<L}_{q,k,v}\mid \mathcal{C}) \\ 
		= & \ln \int_{z^{<L}_{q,k,v}} q_\phi(z^{<L}_{q,k,v}\mid\mathcal{C}, \mathbf{b}) \frac{p_\theta(\mathbf{b}, z^{<L}_{q,k,v}\mid \mathcal{C})}{q_\phi(z^{<L}_{q,k,v}\mid  \mathcal{C},\mathbf{b})} 
		= \ln \mathbb{E}_{q_\phi(z^{<L}_{q,k,v}\mid\mathcal{C}, \mathbf{b})}\left[\frac{p_\theta(\mathbf{b}, z^{<L}_{q,k,v}\mid \mathcal{C})}{q_\phi(z^{<L}_{q,k,v}\mid  \mathcal{C},\mathbf{b})}\right] \\
		\geq & \mathbb{E}_{q_\phi(z^{<L}_{q,k,v}\mid\mathcal{C}, \mathbf{b})}\left[\ln\frac{p_\theta(\mathbf{b}, z^{<L}_{q,k,v}\mid \mathcal{C})}{q_\phi(z^{<L}_{q,k,v}\mid  \mathcal{C},\mathbf{b})}\right] 
		\overset{(1)}{=} \mathbb{E}_{q_\phi(z^{<L}_{q,k,v}\mid\mathcal{C}, \mathbf{b})}\left[\ln\frac{p_\theta(\mathbf{b}\mid z^{<L}_{q,k,v}, \mathcal{C})\cdot p_\theta(z^{<L}_{q,k,v}\mid \mathcal{C})}{q_\phi(z^{<L}_{q,k,v}\mid  \mathcal{C},\mathbf{b})}\right],
	\end{split}
\end{equation}
where step (1) is based on the chain rule in probability theory. 
Next, based on the defined conditional dependencies of random variables in our probabilistic graphical model (Fig.~\ref{graph}), the chain rule is also used to factorize  the posterior distribution $q_\phi(z^{<L}_{q,k,v}\mid\mathcal{C}, \mathbf{b})$ and the prior distribution $p_\theta(z^{<L}_{q,k,v}\mid\mathcal{C})$ in Eq.~\ref{init1}:
\begin{equation} \label{post_1}
	\begin{split}
		q_\phi(z^{<L}_{q,k,v}\mid\mathcal{C}, \mathbf{b}) = & q_\phi(z^{L-1}_{q,k,v}\mid z^{<L-1}_{q,k,v}, \mathcal{C}, \mathbf{b}) \cdot q_\phi(z^{<L-1}_{q,k,v}\mid \mathcal{C}, \mathbf{b}) \\
		= & q_\phi(z^{L-1}_{q,k,v}\mid z^{<L-1}_{q,k,v}, \mathcal{C}, \mathbf{b}) \cdot q_\phi(z^{L-2}_{q,k,v}\mid z^{<L-2}_{q,k,v}, \mathcal{C}, \mathbf{b}) \cdot q_\phi(z^{<L-2}_{q,k,v}\mid \mathcal{C}, \mathbf{b}) \\
		&...... \\
		=& \Pi^{L-1}_{l=0}{q_\phi(z^{l}_{q,k,v}\mid z^{<l}_{q,k,v}, \mathcal{C}}, \mathbf{b}) \\
	\end{split}
\end{equation}
\begin{equation} \label{prior_1}
	\begin{split}
		p_\theta(z^{<L}_{q,k,v}\mid\mathcal{C}) = & p_\theta(z^{L-1}_{q,k,v}\mid z^{<L-1}_{q,k,v}, \mathcal{C}) \cdot p_\theta(z^{<L-1}_{q,k,v}\mid \mathcal{C}) \\
		= & p_\theta(z^{L-1}_{q,k,v}\mid z^{<L-1}_{q,k,v}, \mathcal{C}) \cdot p_\theta(z^{L-2}_{q,k,v}\mid z^{<L-2}_{q,k,v}, \mathcal{C}) \cdot p_\theta(z^{<L-2}_{q,k,v}\mid \mathcal{C}) \\
		&...... \\
		=& \Pi^{L-1}_{l=0}{p_\theta(z^{l}_{q,k,v}\mid {z^{<l}_{q,k,v}, \mathcal{C}})}. 
	\end{split}
\end{equation}
By inserting the factorized posterior (Eq.~\ref{post_1}) and prior (Eq.~\ref{prior_1}) distributions into Eq.~\ref{init1}, we can drive the detailed variational lower bound as below:
\begin{equation}\label{detail}\small
	\begin{split}
		&\ln p_\theta(\mathbf{b}\mid \mathcal{C}) = \ln \int_{z^{<L}_{q,k,v}} p_\theta(\mathbf{b}, z^{<L}_{q,k,v}\mid \mathcal{C}) \\ 
	    \geq & \mathbb{E}_{q_\phi(z^{<L}_{q,k,v}\mid\mathcal{C}, \mathbf{b})}\left[\ln\frac{p_\theta(\mathbf{b}\mid z^{<L}_{q,k,v}, \mathcal{C})\cdot p_\theta(z^{<L}_{q,k,v}\mid \mathcal{C})}{q_\phi(z^{<L}_{q,k,v}\mid  \mathcal{C},\mathbf{b})}\right] \\
		 = & \mathbb{E}_{\Pi^{L-1}_{l=0}{q_\phi(z^{l}_{q,k,v}\mid z^{<l}_{q,k,v}, \mathcal{C}}, \mathbf{b})}\left[\ln\frac{p_\theta(\mathbf{b}\mid z^{<L}_{q,k,v}, \mathcal{C})\cdot \Pi^{L-1}_{l=0}{p_\theta(z^{l}_{q,k,v}\mid {z^{<l}_{q,k,v}, \mathcal{C}})}}{\Pi^{L-1}_{l=0}{q_\phi(z^{l}_{q,k,v}\mid z^{<l}_{q,k,v}, \mathcal{C}}, \mathbf{b})}\right] \\
		= & \mathbb{E}_{\Pi^{L-1}_{l=0}{q_\phi(z^{l}_{q,k,v}\mid z^{<l}_{q,k,v}, \mathcal{C}}, \mathbf{b})}\left[\ln p_\theta(\mathbf{b}\mid z^{<L}_{q,k,v}, \mathcal{C}) - \ln\frac{\Pi^{L-1}_{l=0}{q_\phi(z^{l}_{q,k,v}\mid z^{<l}_{q,k,v}, \mathcal{C}}, \mathbf{b})}{\Pi^{L-1}_{l=0}{p_\theta(z^{l}_{q,k,v}\mid {z^{<l}_{q,k,v}, \mathcal{C}})}}\right] \\
		= & \mathbb{E}_{\Pi^{L-1}_{l=0}{q_\phi(z^{l}_{q,k,v}\mid z^{<l}_{q,k,v}, \mathcal{C}}, \mathbf{b})}\left[\ln p_\theta(\mathbf{b}\mid z^{<L}_{q,k,v}, \mathcal{C}) - \sum^{L-1}_{l=0}\ln\frac{{q_\phi(z^{l}_{q,k,v}\mid z^{<l}_{q,k,v}, \mathcal{C}}, \mathbf{b})}{{p_\theta(z^{l}_{q,k,v}\mid {z^{<l}_{q,k,v}, \mathcal{C}})}}\right] \\
		= & \mathbb{E}_{\Pi^{L-1}_{l=0}{q_\phi(z^{l}_{q,k,v}\mid z^{<l}_{q,k,v}, \mathcal{C}}, \mathbf{b})}\left[\ln p_\theta(\mathbf{b}\mid z^{<L}_{q,k,v}, \mathcal{C})\right] - \mathbb{E}_{\Pi^{L-1}_{l=0}{q_\phi(z^{l}_{q,k,v}\mid z^{<l}_{q,k,v}, \mathcal{C}}, \mathbf{b})}\left[\sum^{L-1}_{l=0}\ln\frac{{q_\phi(z^{l}_{q,k,v}\mid z^{<l}_{q,k,v}, \mathcal{C}}, \mathbf{b})}{{p_\theta(z^{l}_{q,k,v}\mid {z^{<l}_{q,k,v}, \mathcal{C}})}}\right] \\
		= & \mathbb{E}_{\Pi^{L-1}_{l=0}{q_\phi(z^{l}_{q,k,v}\mid z^{<l}_{q,k,v}, \mathcal{C}}, \mathbf{b})}\left[\ln p_\theta(\mathbf{b}\mid z^{<L}_{q,k,v}, \mathcal{C})\right] - \sum^{L-1}_{l=0}\mathbb{E}_{\Pi^{L-1}_{l=0}{q_\phi(z^{l}_{q,k,v}\mid z^{<l}_{q,k,v}, \mathcal{C}}, \mathbf{b})}\left[\ln\frac{{q_\phi(z^{l}_{q,k,v}\mid z^{<l}_{q,k,v}, \mathcal{C}}, \mathbf{b})}{{p_\theta(z^{l}_{q,k,v}\mid {z^{<l}_{q,k,v}, \mathcal{C}})}}\right] \\
		= & \mathbb{E}_{\Pi^{L-1}_{l=0}{q_\phi(z^{l}_{q,k,v}\mid z^{<l}_{q,k,v}, \mathcal{C}}, \mathbf{b})}\left[\ln p_\theta(\mathbf{b}\mid z^{<L}_{q,k,v}, \mathcal{C})\right] - \sum^{L-1}_{l=0}\mathbb{E}_{{\Pi^{ l}_{\tau=0}q_\phi(z^{\tau}_{q,k,v}\mid z^{<\tau}_{q,k,v}, \mathcal{C}}, \mathbf{b})}\left[\ln\frac{{q_\phi(z^{l}_{q,k,v}\mid z^{<l}_{q,k,v}, \mathcal{C}}, \mathbf{b})}{{p_\theta(z^{l}_{q,k,v}\mid {z^{<l}_{q,k,v}, \mathcal{C}})}}\right] \\
		= & \mathbb{E}_{\Pi^{L-1}_{l=0}{q_\phi(z^{l}_{q,k,v}\mid z^{<l}_{q,k,v}, \mathcal{C}}, \mathbf{b})}\left[\ln p_\theta(\mathbf{b}\mid z^{<L}_{q,k,v}, \mathcal{C})\right] - \sum^{L-1}_{l=0}\mathbb{E}_{\Pi^{l-1}_{\tau=0}q_\phi(z^{\tau}_{q,k,v}\mid z^{<\tau}_{q,k,v}, \mathcal{C}, \mathbf{b})}\left[ \operatorname{D_{KL}}\left(q_\phi(z^{l}_{q,k,v}\mid z^{<l}_{q,k,v}, \mathcal{C}, \mathbf{b})||  p_\theta(z^{l}_{q,k,v}\mid {z^{<l}_{q,k,v}, \mathcal{C}})\right)\right] \\
		= & \operatorname{ELBO}(\theta, \phi)
	\end{split}
\end{equation}

In our implementation, we use the deterministic hidden features $\{\mathbf{h}^l_{q,k,v}\}^{L-1}_{l=0}$ to summarize the historical information in previous iterations (\ie, the condition parts of prior and posterior distributions) so that the lower bound can be rewritten as:
\begin{equation}\small
	\begin{split}
		& \mathbb{E}_{\Pi^{L-1}_{l=0}{q_\phi(z^{l}_{q,k,v}\mid \underbrace{z^{<l}_{q,k,v}, \mathcal{C}}_{\mathbf{h}^l_{q,k,v}}}, \mathbf{b})}\Bigg[\ln {p_\theta}(\mathbf{b}\mid \underbrace{z^{<L}_{q,k,v}, \mathcal{C}}_{\tilde{\mathbf{F}}^L})\Bigg] - \sum^{L-1}_{l=0}\mathbb{E}_{\Pi^{l-1}_{\tau=0}q_\phi(z^{\tau}_{q,k,v}\mid z^{<\tau}_{q,k,v}, \mathcal{C}, \mathbf{b})} \Bigg[\operatorname{D_{KL}}\Bigg(q_\phi(z^{l}_{q,k,v}\mid \underbrace{z^{<l}_{q,k,v}, \mathcal{C}}_{\mathbf{h}^l_{q,k,v}}, \mathbf{b})||  p_\theta(z^{l}_{q,k,v}\mid {\underbrace{z^{<l}_{q,k,v}, \mathcal{C}}_{\mathbf{h}^l_{q,k,v}}})\Bigg)\Bigg] \\
		=& \mathbb{E}_{\Pi^{L-1}_{l=0}{q_\phi(z^{l}_{q,k,v}\mid {\mathbf{h}^l_{q,k,v}}}, \mathbf{b})}\left[\ln y_\theta(\mathbf{b}\mid {\tilde{\mathbf{F}}^L})\right] - \sum^{L-1}_{l=0}\mathbb{E}_{\Pi^{l-1}_{\tau=0}q_\phi(z^{\tau}_{q,k,v}\mid z^{<\tau}_{q,k,v}, \mathcal{C}, \mathbf{b})} \left[\operatorname{D_{KL}}\left(q_\phi(z^{l}_{q,k,v}\mid {\mathbf{h}^l_{q,k,v}}, \mathbf{b})||  p_\theta(z^{l}_{q,k,v}\mid {{\mathbf{h}^l_{q,k,v}}})\right)\right] 
	\end{split}
\end{equation}
where we use the correspondence features $\tilde{\mathbf{F}}^L$  of the last non-local iteration to summarize the condition parts of $p_\theta(z^{l}_{q,k,v}\mid {z^{<l}_{q,k,v}, \mathcal{C}})$. 
Also, to avoid ambiguity, we denote the label prediction model $p_\theta(\mathbf{b}\mid \tilde{\mathbf{F}})$ as $y_\theta(\mathbf{b}\mid \tilde{\mathbf{F}})$. 


\subsection{Point-wise Variational Lower Bound} \label{bound2} Furthermore, we extend Eq.~\ref{detail} to a point-wise  version. To this end, we rewrite the injected random variables $z^{<L}_{q,k,v}$ as $z^{<L,1:N}_{q,k,v}=\{z^{l, i}_{q}, z^{l, i}_{k}, z^{l, i}_{v}\}_{0\leq l <L, 1\leq i\leq N}$ and we assume the points are independent. Thus, the prior and posterior distributions in Eq.~\ref{detail} can be further divided as:
\begin{equation}\label{eq10}
	\begin{split}
			&{p_\theta(z^{l, 1: N}_{q,k,v}\mid z^{<l, 1: N}_{q,k,v}, \mathcal{C}}) 
		= \Pi^{N}_{i=1}{p_\theta(z^{l, i}_{q,k,v}\mid z^{<l, 1: N}_{q,k,v}, \mathcal{C}}) \\
		&{q_\phi(z^{l, 1: N}_{q,k,v}\mid z^{<l, 1: N}_{q,k,v}, \mathcal{C}}, \mathbf{b}) 
		= \Pi^{N}_{i=1}{q_\phi(z^{l, i}_{q,k,v}\mid z^{<l, 1: N}_{q,k,v}, \mathcal{C}}, {b_i}) 
	\end{split}
\end{equation}
Also, the point-wise label prediction model can be written as:
\begin{equation}
	\begin{split}\label{eq11}
		\ln p_\theta(\mathbf{b}\mid z^{<L, 1:N}_{q,k,v}, \mathcal{C}) = \ln \Pi^N_{i=1} p_\theta({b_i}\mid z^{<L, 1:N}_{q,k,v}, \mathcal{C}) = \sum_{i=1}^{N} \ln p_\theta({b_i}\mid z^{<L, 1:N}_{q,k,v}, \mathcal{C})
	\end{split}
\end{equation}
By inserting Eq.~\ref{eq10} and Eq.~\ref{eq11} into Eq.~\ref{detail}, we can achieve the following point-wise variational lower bound:
\begin{equation}
	\begin{split}
	 \operatorname{ELBO}(\theta, \phi)	&	=  \mathbb{E}_{\Pi^{L-1}_{l=0}\Pi^{N}_{i=1}{q_\phi(z^{l, i}_{q,k,v}\mid z^{<l, 1: N}_{q,k,v}, \mathcal{C}}, {b_i})}\left[ \sum_{i=1}^{N} \ln p_\theta({b_i}\mid z^{<L, 1:N}_{q,k,v}, \mathcal{C})\right] \\
		& - \sum^{L-1}_{l=0}\mathbb{E}_{\Pi^{l}_{\tau=0}\Pi^{N}_{i=1}{q_\phi(z^{\tau, i}_{q,k,v}\mid z^{<\tau, 1: N}_{q,k,v}, \mathcal{C}}, {b_i})}\left[\ln\frac{\Pi^{N}_{i=1}{q_\phi(z^{l, i}_{q,k,v}\mid z^{<l, 1: N}_{q,k,v}, \mathcal{C}}, {b_i})}{{\Pi^{N}_{i=1}{p_\theta(z^{l, i}_{q,k,v}\mid z^{<l, 1: N}_{q,k,v}, \mathcal{C}})}}\right] \\
		= & \mathbb{E}_{\Pi^{L-1}_{l=0}\Pi^{N}_{i=1}{q_\phi(z^{l, i}_{q,k,v}\mid z^{<l, 1: N}_{q,k,v}, \mathcal{C}}, {b_i})}\left[ \sum_{i=1}^{N} \ln p_\theta({b_i}\mid z^{<L, 1:N}_{q,k,v}, \mathcal{C})\right] \\
		& - \sum^{L-1}_{l=0}\sum^{N}_{i=1}\mathbb{E}_{\Pi^{l}_{\tau=0}\Pi^{N}_{i=1}{q_\phi(z^{\tau, i}_{q,k,v}\mid z^{<\tau, 1: N}_{q,k,v}, \mathcal{C}}, {b_i})}\left[\ln\frac{{q_\phi(z^{l, i}_{q,k,v}\mid z^{<l, 1: N}_{q,k,v}, \mathcal{C}}, {b_i})}{{{p_\theta(z^{l, i}_{q,k,v}\mid z^{<l, 1: N}_{q,k,v}, \mathcal{C}})}}\right] \\
		= & \mathbb{E}_{\Pi^{L-1}_{l=0}\Pi^{N}_{i=1}{q_\phi(z^{l, i}_{q,k,v}\mid z^{<l, 1: N}_{q,k,v}, \mathcal{C}}, {b_i})}\left[ \sum_{i=1}^{N} \ln p_\theta({b_i}\mid z^{<L, 1:N}_{q,k,v}, \mathcal{C})\right] \\
		& - \sum^{L-1}_{l=0}\sum^{N}_{i=1}\mathbb{E}_{\Pi^{l-1}_{\tau=0}\Pi^{N}_{i=1}{q_\phi(z^{\tau, i}_{q,k,v}\mid z^{<\tau, 1: N}_{q,k,v}, \mathcal{C}}, {b_i})}\left[\operatorname{D_{KL}}\left({{q_\phi(z^{l, i}_{q,k,v}\mid z^{<l, 1: N}_{q,k,v}, \mathcal{C}}, {b_i})} || {{{p_\theta(z^{l, i}_{q,k,v}\mid z^{<l, 1: N}_{q,k,v}, \mathcal{C}})}}\right)\right] \\
	\end{split}
\end{equation}

Similarity, we use the deterministic hidden feature $\mathbf{h}^{l, i}_{q,k,v}$ to summarize the historical information in previous iterations:
\begin{equation}
	\begin{split}
		\operatorname{ELBO}(\theta, \phi)	&	= \mathbb{E}_{\Pi^{L-1}_{l=0}\Pi^{N}_{i=1}{q_\phi(z^{l, i}_{q,k,v}\mid \underbrace{z^{<l, 1: N}_{q,k,v}, \mathcal{C}}_{\mathbf{h}^{l, i}_{q,k,v}}}, {b_i})}\left[ \sum_{i=1}^{N} \ln p_\theta({b_i}\mid \underbrace{z^{<L, 1:N}_{q,k,v}, \mathcal{C}}_{\tilde{\mathbf{F}}^L})\right] \\
		& - \sum^{L-1}_{l=0}\sum^{N}_{i=1}\mathbb{E}_{\Pi^{l-1}_{\tau=0}\Pi^{N}_{i=1}{q_\phi(z^{\tau, i}_{q,k,v}\mid z^{<\tau, 1: N}_{q,k,v}, \mathcal{C}}, {b_i})}\Bigg[\operatorname{D_{KL}}\Bigg({{q_\phi(z^{l, i}_{q,k,v}\mid \underbrace{z^{<l, 1: N}_{q,k,v}, \mathcal{C}}_{\mathbf{h}^{l, i}_{q,k,v}}}, {b_i})} || {{{p_\theta(z^{l, i}_{q,k,v}\mid \underbrace{z^{<l, 1: N}_{q,k,v}, \mathcal{C}}_{\mathbf{h}^{l, i}_{q,k,v}}})}}\Bigg)\Bigg] \\
		= & \mathbb{E}_{\Pi^{L-1}_{l=0}\Pi^{N}_{i=1}{q_\phi(z^{l, i}_{q,k,v}\mid {\mathbf{h}^{l, i}_{q,k,v}}}, {b_i})}\left[ \sum_{i=1}^{N} \ln p_\theta({b_i}\mid {\tilde{\mathbf{F}}^L})\right] \\
		& - \sum^{L-1}_{l=0}\sum^{N}_{i=1}\mathbb{E}_{\Pi^{l-1}_{\tau=0}\Pi^{N}_{i=1}{q_\phi(z^{\tau, i}_{q,k,v}\mid z^{<\tau, 1: N}_{q,k,v}, \mathcal{C}}, {b_i})}\Bigg[\operatorname{D_{KL}}\Bigg({{q_\phi(z^{l, i}_{q,k,v}\mid {\mathbf{h}^{l, i}_{q,k,v}}}, {b_i})} || {{{p_\theta(z^{l, i}_{q,k,v}\mid {\mathbf{h}^{l, i}_{q,k,v}}})}}\Bigg)\Bigg] \\
	\end{split}
\end{equation}

\section{Proof of Theorem 1} \label{proof2}
We let $\{{}^{(\kappa)}\mathcal{M}^{sac}_i\}_{i=1}^J$ be the randomly sampled \textit{hypothetical inlier} subset in RANSAC, and let  $\mathcal{C}_{in}$, $\mathcal{C}_{out}$ and $p_{in}$ be the inlier subset, outlier subset and the inlier ratio, respectively, $\mathcal{C}=\mathcal{C}_{in}\cup\mathcal{C}_{out}$, $p_{in}={|\mathcal{C}_{in}|}/{|\mathcal{C}|}$. 
We also denote the inliers in seed subset $\mathcal{C}_{seed}$ as $\tilde{\mathcal{C}}_{in}=\mathcal{C}_{in}\cap\mathcal{C}_{seed}$. Then, we can derive the following theorem: 
\begin{theorem}  \label{theorem1}
	Assume the number of outliers in ${}^{(\kappa)}\tilde{\mathcal{M}}_{\mathbf{c}_i}$ ($\mathbf{c}_i\in\tilde{\mathcal{C}}_{in}$) follows a Poisson distribution $Pois\left(\alpha \cdot \kappa\right)$. Then, if $\alpha < -\frac{1}{\kappa} \cdot \log \left[1 - (1 - p_{in}^\kappa)^{J/|\tilde{\mathcal{C}}_{in}|}\right]\triangleq\mathcal{U}$, the probability of our method achieving the inlier subset is greater than or equal to that of RANSAC. 
	\begin{equation}
		\begin{split}\label{neq1}
			&P\Big(\max_{\mathbf{c}_i \in \mathcal{C}_{seed}} |{}^{(\kappa)}\tilde{\mathcal{M}}_{\mathbf{c}_i}\cap \mathcal{C}_{in}| =\kappa\Big) \geq \\ &P\Big(\max_{1\leq i\leq J} |{}^{(\kappa)}\mathcal{M}^{sac}_i\cap \mathcal{C}_{in}| =\kappa\Big).
		\end{split}
	\end{equation}
\end{theorem}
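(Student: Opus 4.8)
The plan is to reduce both sides of the claimed inequality~\eqref{neq1} to explicit closed-form success probabilities and then compare them, letting the threshold $\mathcal{U}$ emerge by solving the resulting inequality for $\alpha$. The event on each side is a ``hit'' event---the existence of at least one fully-inlier subset of size $\kappa$ among the candidate subsets---so I would write each probability as one minus the probability that every trial fails, under suitable independence assumptions.

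For the RANSAC term, first I would note that a single uniformly sampled subset of $\kappa$ correspondences is entirely inlier with probability $p_{in}^\kappa$, the standard RANSAC all-inlier sampling probability. Treating the $J$ draws as independent, the probability that at least one of them is fully inlier is
$$P\Big(\max_{1\leq i\leq J}|{}^{(\kappa)}\mathcal{M}^{sac}_i\cap\mathcal{C}_{in}|=\kappa\Big)=1-(1-p_{in}^\kappa)^J.$$

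For our method, the key structural observation is that since each hypothetical set satisfies ${}^{(\kappa)}\tilde{\mathcal{M}}_{\mathbf{c}_i}=\{\mathbf{c}_i\}\cup\dots$, a fully-inlier set can only be produced by a seed that is itself an inlier; hence only the $|\tilde{\mathcal{C}}_{in}|$ seeds in $\tilde{\mathcal{C}}_{in}$ can contribute to the event. For such a seed the number of outliers in ${}^{(\kappa)}\tilde{\mathcal{M}}_{\mathbf{c}_i}$ is $\operatorname{Pois}(\alpha\kappa)$ by assumption, so the probability of zero outliers (a fully-inlier set) is $e^{-\alpha\kappa}$. Assuming independence across inlier seeds, the success probability for our method becomes
$$P\Big(\max_{\mathbf{c}_i\in\mathcal{C}_{seed}}|{}^{(\kappa)}\tilde{\mathcal{M}}_{\mathbf{c}_i}\cap\mathcal{C}_{in}|=\kappa\Big)=1-(1-e^{-\alpha\kappa})^{|\tilde{\mathcal{C}}_{in}|}.$$

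Finally I would substitute both expressions into~\eqref{neq1} and simplify. The desired inequality becomes $(1-e^{-\alpha\kappa})^{|\tilde{\mathcal{C}}_{in}|}\leq(1-p_{in}^\kappa)^J$; taking the $|\tilde{\mathcal{C}}_{in}|$-th root, isolating $e^{-\alpha\kappa}$, and applying the monotone logarithm yields exactly $\alpha\leq\mathcal{U}$, as claimed. The main obstacle is not the algebra but justifying the probabilistic model: I must argue carefully that only inlier seeds contribute, that the per-trial independence assumptions are acceptable, and---so that the root-taking and logarithm preserve inequality directions---that the bases $1-p_{in}^\kappa$ and $1-e^{-\alpha\kappa}$ lie in $[0,1)$ and that the logarithm's argument $1-(1-p_{in}^\kappa)^{J/|\tilde{\mathcal{C}}_{in}|}$ is strictly positive.
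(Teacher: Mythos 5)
Your proposal is correct and follows essentially the same route as the paper's proof: both sides are reduced to the closed forms $1-\left(1-e^{-\alpha\kappa}\right)^{|\tilde{\mathcal{C}}_{in}|}$ (via the Poisson zero-outlier probability, restricted to inlier seeds and using independence) and $1-\left(1-p_{in}^{\kappa}\right)^{J}$ (via independent all-inlier RANSAC trials), after which the threshold $\mathcal{U}$ emerges by solving the comparison for $\alpha$ exactly as you describe. The only cosmetic differences are that the paper computes the RANSAC per-trial probability as the hypergeometric ratio $C^{\kappa}_{|\mathcal{C}_{in}|}/C^{\kappa}_{|\mathcal{C}|}$ and bounds it above by $p_{in}^{\kappa}$ (where you take $p_{in}^{\kappa}$ as exact), and the paper only uses the one-sided bound $P(\max_{\mathbf{c}_i\in\mathcal{C}_{seed}}\cdot)\geq P(\max_{\mathbf{c}_i\in\tilde{\mathcal{C}}_{in}}\cdot)$ where you sharpen it to an equality of events; both variants yield the same inequality chain.
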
 
\begin{proof}  The probability that RANSAC can achieve the inlier subset can be calculated via: 
	\begin{equation}
		\begin{split}
			P\Big(\max_{1\leq i\leq J} |{}^{(\kappa)}\mathcal{M}^{sac}_i\cap \mathcal{C}_{in}| =\kappa\Big)
			& = 1 -  P\Big(\max_{1\leq i\leq J} |{}^{(\kappa)}\mathcal{M}^{sac}_i\cap \mathcal{C}_{in}| <\kappa\Big) \\
			& =1-P\Big( |{}^{(\kappa)}\mathcal{M}^{sac}_1\cap \mathcal{C}_{in}| <\kappa\Big)\cdots P\Big( |{}^{(\kappa)}\mathcal{M}^{sac}_J\cap \mathcal{C}_{in}| <\kappa\Big) \\
			& \overset{(1)}{=} 1 - P\Big( |{}^{(\kappa)}\mathcal{M}^{sac}_l\cap \mathcal{C}_{in}| <\kappa\Big)^J\\
			& = 1 - \Big(1-P\Big( |{}^{(\kappa)}\mathcal{M}^{sac}_l\cap \mathcal{C}_{in}| =\kappa\Big)\Big)^J\\
			& = 1 - \left(1 - \frac{C^{\kappa}_{|\mathcal{C}_{in}|}}{C^\kappa_{|\mathcal{C}|}}\right)^J \\
			& = 1 - \left(1 - \frac{|\mathcal{C}_{in}|\cdots (|\mathcal{C}_{in}|-\kappa+1)}{|\mathcal{C}| \cdots (|\mathcal{C}|-\kappa+1)}\right)^J \\
			& \leq 1 - \left(1 - \left(\frac{|\mathcal{C}_{in}|}{|\mathcal{C}|}\right)^\kappa\right)^J =  1 - \left(1 - p_{in}^\kappa\right)^J,
		\end{split}
	\end{equation}
	where step (1) is based on that random variables $\{|{}^{(\kappa)}\mathcal{M}^{sac}_l\cap \mathcal{C}_{in}| <\kappa\}_{1\leq l \leq J}$ are i.i.d. Analogously, the probability of our method achieving the inlier subset can be calculated via: 
	\begin{equation}
		\begin{split}
			P\Big(\max_{\mathbf{c}_i\in \mathcal{C}_{seed}} |{}^{(\kappa)}\tilde{\mathcal{M}}_{\mathbf{c}_i}\cap \mathcal{C}_{in}| =\kappa\Big)
			& \geq P\Big(\max_{\mathbf{c}_i\in \tilde{\mathcal{C}}_{in}} |{}^{(\kappa)}\tilde{\mathcal{M}}_{\mathbf{c}_i}\cap \mathcal{C}_{in}| =\kappa\Big) \\
			& = 1 -  P\Big(\max_{\mathbf{c}_i\in \tilde{\mathcal{C}}_{in}} |{}^{(\kappa)}\tilde{\mathcal{M}}_{\mathbf{c}_i}\cap \mathcal{C}_{in}| < \kappa \Big) \\
			& {=} 1 -  \Pi_{\mathbf{c}_i \in\tilde{\mathcal{C}}_{in}}P\Big( |{}^{(\kappa)}\tilde{\mathcal{M}}_{\mathbf{c}_i}\cap \mathcal{C}_{in}| < \kappa\Big) \\
			& \overset{(1)}{=} 1 -  P\Big( |{}^{(\kappa)}\tilde{\mathcal{M}}_{\mathbf{c}_l}\cap \mathcal{C}_{in}| < \kappa\mid \mathbf{c}_l\in \tilde{\mathcal{C}}_{in}\Big)^{|\tilde{\mathcal{C}}_{in}|} \\
			& = 1 -  \Big(1-P\Big( |{}^{(\kappa)}\tilde{\mathcal{M}}_{\mathbf{c}_l}\cap \mathcal{C}_{in}| = \kappa\mid \mathbf{c}_l\in \tilde{\mathcal{C}}_{in}\Big)\Big)^{|\tilde{\mathcal{C}}_{in}|} \\
			& = 1 -  \Big(1-P\Big( |{}^{(\kappa)}\tilde{\mathcal{M}}_{\mathbf{c}_l}\cap \mathcal{C}_{out}| = 0\mid \mathbf{c}_l\in \tilde{\mathcal{C}}_{in}\Big)\Big)^{|\tilde{\mathcal{C}}_{in}|} \\
			& \overset{(2)}{=} 1 - \left(1 - e^{-\alpha\cdot \kappa}\right)^{|\tilde{\mathcal{C}}_{in}|},
		\end{split}
	\end{equation}
where step (1) is based on that random variables $\{|{}^{(\kappa)}\tilde{\mathcal{M}}_{\mathbf{c}_i}\cap \mathcal{C}_{in}| < \kappa\}_{\mathbf{c}_i\in\tilde{\mathcal{C}}_{in}}$ are i.i.d; Step (2) is based on our assumption of Poisson distribution: $P\Big( |{}^{(\kappa)}\tilde{\mathcal{M}}_{\mathbf{c}_i}\cap \mathcal{C}_{out}| = m\mid \mathbf{c}_i\in \tilde{\mathcal{C}}_{in}\Big)=\frac{(\alpha\cdot\kappa)^m e^{-\alpha\cdot\kappa}}{m!}$. Finally, we let $1 - \left(1 - e^{-\alpha\cdot \kappa}\right)^{|\tilde{\mathcal{C}}_{in}|} \geq 1 - \left(1 - p_{in}^\kappa\right)^J $ and can get that if $\alpha \leq -\frac{1}{\kappa} \cdot \log \left[1 - (1 - p_{in}^\kappa)^{J/|\tilde{\mathcal{C}}_{in}|}\right]$, the inequality~\ref{neq1} holds. 
\end{proof}

\section{Qualitative Evaluation}
We first give some qualitative comparisons with PointDSC~\cite{bai2021pointdsc} (our baseline) on 3DLoMatch benchmark dataset in Fig.~\ref{graph}. 
As can be observed, in cases containing extremely low-overlapped regions (red box), our method can achieve more precise alignment. 
Those mainly benefit from our more discriminative correspondence embedding based on variational non-local network for more reliable inlier clustering. 
Also, we visualize the registration results on KITTI dataset in Fig.~\ref{graph2}.

\begin{figure}[ht]
	\centering
	\includegraphics[width=\columnwidth]{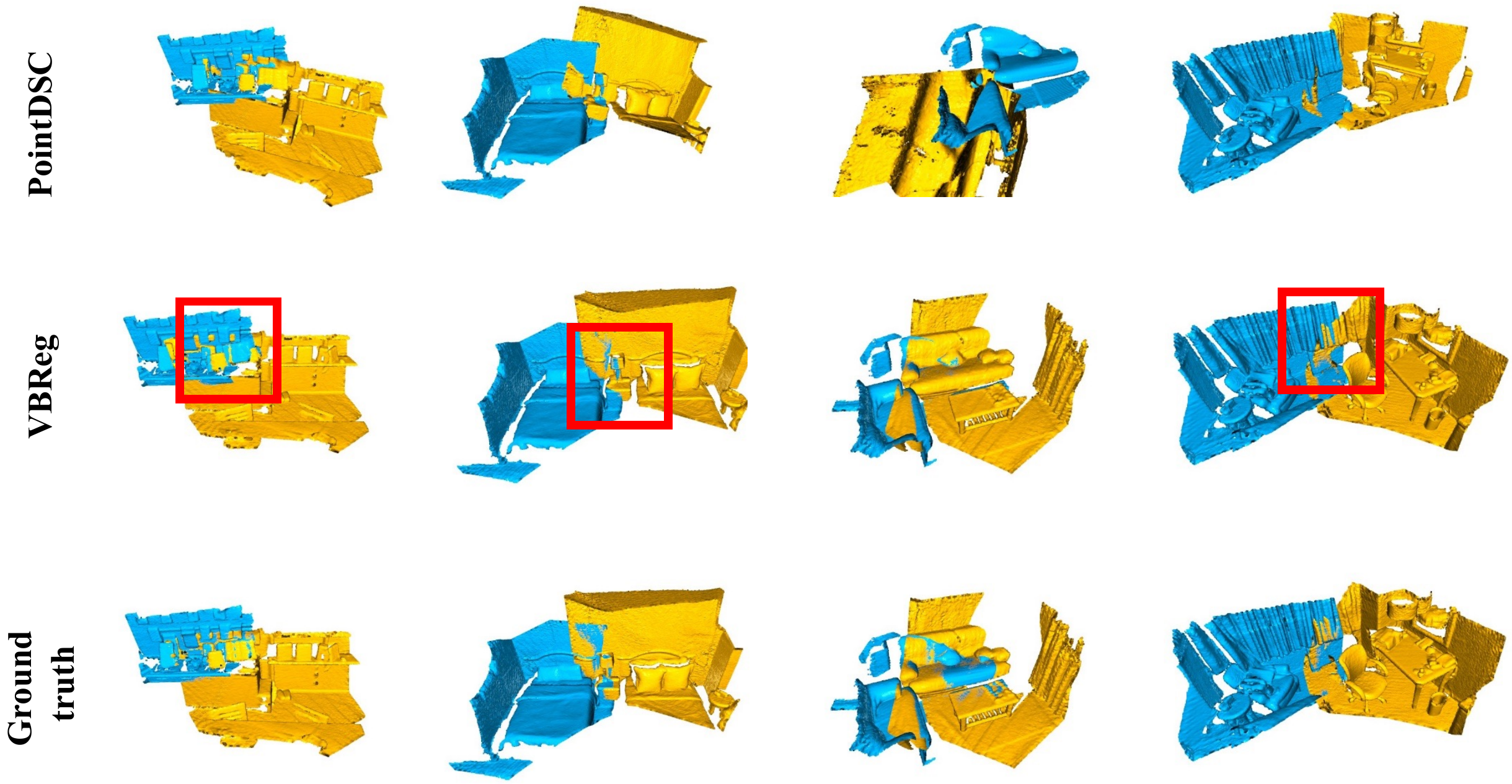}
	\caption{Qualitative comparison with PointDSC~\cite{bai2021pointdsc} (baseline) on {3DLoMatch} benchmark~\cite{huang2021predator}.} \label{fig:model}
	\label{graph}
\end{figure}

\begin{figure}[ht]
	\centering
	\includegraphics[width=\columnwidth]{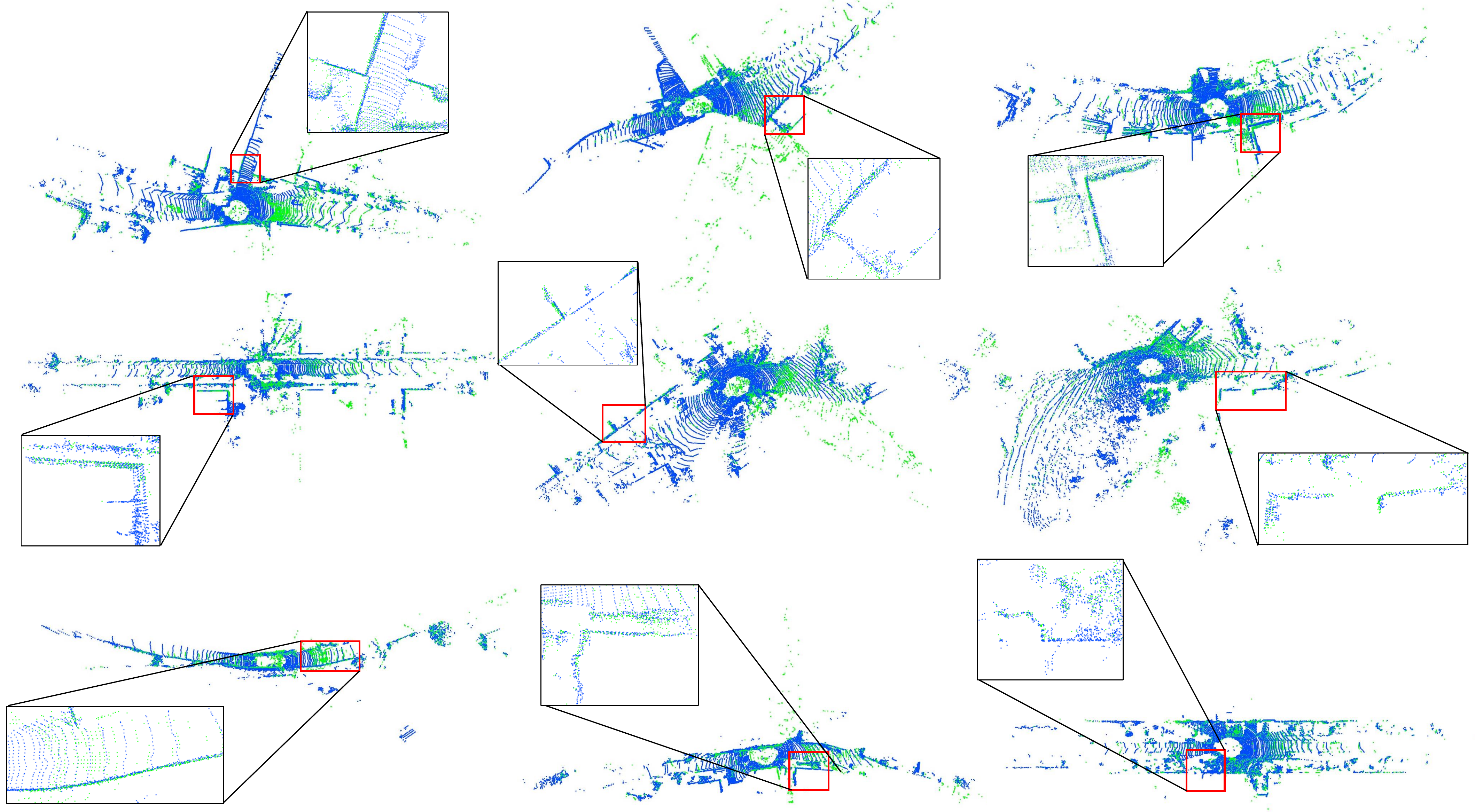}
	\caption{Registration visualization on KITTI benchmark~\cite{geiger2012we}. } \label{fig:model}
	\label{graph2}
\end{figure}

\end{document}